\documentclass[preprint,3p,times]{elsarticle}

\makeatletter
\def\ps@pprintTitle{%
 \let\@oddhead\@empty
 \let\@evenhead\@empty
 \def\@oddfoot{\reset@font\hfil\thepage\hfil}%
 \let\@evenfoot\@oddfoot
 \def\@oddfoot{\footnotesize\itshape [Preprint submitted to Elsevier]\hfill}%
}
\makeatother

\usepackage{amssymb}
\usepackage{amsmath}
\usepackage{booktabs}
\usepackage{algorithm}
\usepackage{rotating}
\usepackage{graphicx}
\usepackage{multirow}
\usepackage{multicol}
\usepackage{algpseudocode}
\usepackage{tikz}
\usepackage{adjustbox}
\usepackage{paralist}
\usepackage{enumitem}
\usepackage{array}
\usepackage{lineno}
\usepackage{tabularx}
\usepackage{url}

\newtheorem{theorem}{Theorem}

\newtheorem{corollary}{Corollary}
\newtheorem{proof}{Proof}

\journal{Elsevier}

\begin{document}

\begin{frontmatter}

\title{Separable Neural Architectures as Physical World Models: from Mathematical Theory to Applications}

\author[label1]{Reza T. Batley}
\ead{rezabatley@vt.edu}

\author[label1]{Andrew Kichline}

\author[label1]{Sourav Saha\corref{cor1}} \ead{souravsaha@vt.edu}
\cortext[cor1]{Corresponding author}

\affiliation[label1]{organization={Kevin T. Crofton Department of Aerospace and Ocean Engineering, Virginia Polytechnic Institute and State University},
            addressline={1600 Innovation Drive}, 
            city={Blacksburg},
            postcode={24060}, 
            state={VA},
            country={United States}}

\begin{abstract}
This work introduces the Separable Neural Architecture (SNA), a function representational class combining neural approximation with tensor decomposition. The SNA decouples localized coordinate functions -- atoms -- from global interactions, governed by a sparse and low-rank interaction object. This architecture possesses a compact and smooth inductive bias, well-suited in the solution of PDEs. Indeed, when viewed as a Galerkin trial space under the \emph{variational} SNA (VSNA) framework, the resulting formulation satisfies variational guarantees under Lax-Milgram: well-posedness, quasi-optimality, convergence and stability. In high-dimensional spatiotemporal--parametric PDEs, the VSNA scales algebraically with dimensionality, as opposed to the exponential explosion in complexity of grid-based methods. Exploiting an entirely factorized, tensor-native alternating least squares (ALS) optimization framework brings the cost down to linear in dimension. The architecture is evaluated across the three PDE classes, elliptic Poisson, hyperbolic inviscid- and viscous Burgers, and a parabolic six-dimensional parametric advection-diffusion system. The VSNA exhibits agreement with predicted algebraic and spectral scaling rates. The SNA's capability as a solve once, query anywhere physical world model is demonstrated in two case studies in engineering science. In the study of a seven-dimensional parametric model of variable beam shape laser powder bed fusion manufacturing process, the VSNA encodes non-separable laser geometries and carries out a 1,000,000-query Monte Carlo optimization sweep under uncertainty in 102s on commodity laptop CPU hardware. This is a $150{,}000\times$ speedup over a traditional full-grid finite element multi-query baseline hosted on an NVIDIA A100 GPU. On sparse experimental directed energy deposition data, the SNA maps thermal histories to Inconel 718 tensile properties with five orders of magnitude fewer parameters than the prior art, whilst enabling generative inverse-mode reconstructions in under 100ms, again on commodity hardware. These results demonstrate that the SNA serves a single, compact mathematical substrate for continuous parameter manifolds to enable real-time inversion, optimization loops and rapid uncertainty propagation.
\end{abstract}



\begin{keyword}
Separable neural architectures \sep Physical world models \sep Variational methods \sep Parametric PDEs \sep Manufacturing \sep Scientific Machine Learning
\end{keyword}

\end{frontmatter}




\section{Introduction}

Predicting the evolution of systems -- under varying loads, geometries, material properties or operating conditions -- is one of the central challenges in engineering science. Classical discretization methods are mathematically rigorous, well-posed and backed by decades of convergence theory. For single queries at fixed parameters in low dimensions ($\leq 3$) -- the spatial dimensions of everyday engineering problems -- they are the gold standard. Their cost is also unambiguous: degrees of freedom scale exponentially with dimension, with solvers typically carrying a further growth factor, a dominant computational burden. Parametric and multi-query problems -- optimization, uncertainty quantification, inverse modeling -- reintroduce the full solver cost at every new parameter evaluation, rendering grid-based approaches untenable even when $d$ is modest \cite{bellman1961}. Physics-informed neural approximators \cite{raissi2019,karniadakis2021} circumvent the grid but carry no structural prior reflecting the physics of the problem, and provide no analogue of the convergence guarantees that make classical methods trustworthy.

The engineering ideal is a single, compact surrogate that behaves as a \emph{physical world model} \cite{ha2018world}. Distinct from agentic AI frameworks -- which focus on sequential decision-making processes -- a physical world model acts as a continuous map of a system's complete physical manifold. It embeds the underlying conservation laws directly into its structural representation, enabling it to internalize the spacetime and parametric landscape. It must be queried instantly at any point in this space; invertible for optimal design; and amenable to uncertainty propagation without re-solving. The representation this demands must be accurate, theoretically certified and parameter-efficient, defined over the entire solution manifold, not just at isolated points.

Many physical fields are, exactly or approximately, separable in their natural variables. Where this structure is present and exploited, representations become compact and interpretable. Tensor decomposition methods \cite{kolda2009,grasedyck2013} have made separability precise for discrete arrays, yielding exponential reductions in storage and enabling computation in regimes intractable for dense formats. A unified neural framework combining explicit interaction control, universal approximation and variational theory has remained largely absent from the literature.

\begin{figure}[!h]
    \centering
    \includegraphics[width=\linewidth]{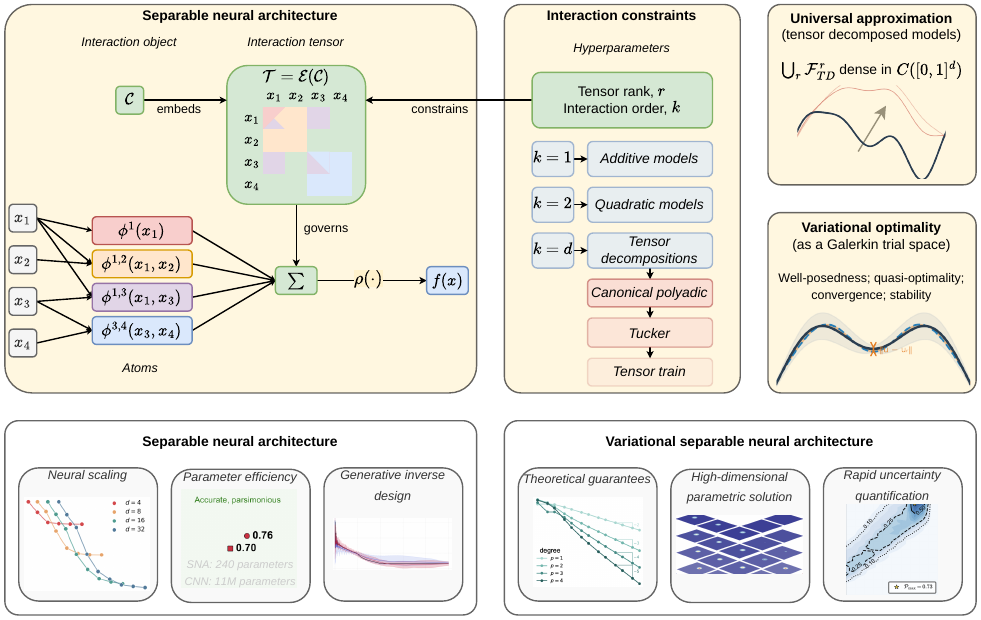}
    \caption{\textbf{Architectural map, theoretical foundations and the dual computational frameworks of the Separable Neural Architecture.} Inputs $x$ is projected into localized coordinate subsets $S$ selected by an interaction tensor embedding $\mathcal{T}=\mathcal{E}(\mathcal{C})$, processed through parallel learnable function blocks -- atoms $\phi^{(S)}$ -- summed and nonlinearity $\rho$ applied to form the approximant $f$. The interaction tensor is constrained by maximum interactivity $k$ and rank $r$.}
    \label{fig:sna}
\end{figure}

This work introduces the \emph{Separable Neural Architecture} (SNA) to fill this gap. Structural interactions within the model are governed by an \emph{interaction object} $\mathcal{C} = \{c_S\}$ admitting a sparse tensor embedding via $\mathcal{E}$, parameterized by interaction order $k$ and rank $r$. This object acts over a collection of learnable functions: \emph{atoms} $\phi^{(S)}$, each depending on a subset $S \subseteq \{1,\ldots,d\}$ of input variables with $|S| \leq k$. This explicit inductive bias restricts parameter allocation to active coordinate interactions. Consequently, the architecture subsumes generalized additive models at $k=1$ \cite{hastie1990}, quadratic interaction models at $k=2$, and CP-decomposed models at $k=d$. The resulting structure is parameter-efficient, universally accurate over continuous domains and possesses tractable, even exact, derivatives. Because it maps over an entire solution manifold, admits rapid inversion and enables real-time uncertainty quantification, the SNA provides the mathematical substrate of a physical world model.

The SNA is extended to the \emph{Variational Separable Neural Architecture} (VSNA), which employs the SNA as a Galerkin trial space over the coupled spatiotemporal--parametric domain $\mathcal{X} = \Omega \times [0,T] \times \mathcal{P}$, with dimension $d = n+1+m$. Well-posedness, quasi-optimality, stability, and convergence of the VSNA Galerkin problem are established via Lax--Milgram Theorem. Universality of the CP-class SNA is proved via Stone--Weierstrass Theorem. Training exploits a tensor-native Alternating Least Squares (ALS) algorithm whose complexity is $O(I_{\max} \cdot d \cdot r^2 \cdot C)$, replacing the $O(N^{3d})$ cost of full-grid discretization.

Figure \ref{fig:sna} provides an overview of the architecture, its theoretical foundations and results developed in this paper. Building on this, the principal contributions of the work are as follows:
\begin{itemize}
    \item The SNA is formally defined as a rank-$r$, order-$k$ interaction architecture parameterized by learnable \emph{atoms} $\varphi$ and a sparse \emph{interaction object} $\mathcal{C}$. Universality over continuous functions on $[0,1]^d$ is proved via the Stone--Weierstrass theorem for the CP-class ($k=d$).
    \item The VSNA is introduced as a Galerkin trial space over coupled spatiotemporal--parametric domains; well-posedness, quasi-optimality, stability and convergence of the resulting variational problem are established under the Lax-Milgram theorem.
    \item A tensor-native ALS training algorithm is derived -- reducing algorithmic complexity from $O(C^{3d})$ to $O(I_\mathrm{max}\cdot d\cdot r^3\cdot C^3)$ for direct-, or $O(I_{\max}\cdot d\cdot r^2\cdot C)$ for iterative solvers.
    \item The SNA and VSNA are validated across an extensive suite of benchmarks. The SNA: a 20-dimensional Sobol-G regression problem, image classification latent scaling sweeps on ImageNet and MNIST, an Inconel 718 directed energy deposition (DED) process-structure predictive and inverse modeling problem; the VSNA: an elliptic Poisson, hyperbolic viscous and inviscid Burgers shock layer, parabolic six-dimensional parametric advection-diffusion problem, and a seven-dimensional parametric shaped-beam laser powder bed fusion (LPBF) process optimization window under uncertainty.
\end{itemize}
The remainder of the paper is structured as follows. Section \ref{sec:background} surveys the classical and neural approximation prior art through which the SNA's contributions should be read. Section \ref{sec:methods} defines the SNA and VSNA, establishes their theoretical properties and introduces the practical implementations used in this work. Section \ref{sec:results} presents numerical experiments across elliptic, hyperbolic and parabolic PDEs, and scaling behavior in supervised machine learning. Section \ref{sec:applications} demonstrates this model on a seven-dimensional problem in laser powder bed fusion for process optimization under uncertainty, and on process--structure forward and inverse modeling in directed energy deposition.

\section{Background}
\label{sec:background}

\subsection{Classical approximation of high-dimensional fields}

The finite element method \cite{strang1973} and spectral discretizations \cite{canuto2007} serve as the methodological core of computational science. Both provide rigorous convergence theories, precise error estimates and well-understood conditioning properties of the resulting linear systems. Across elliptic, parabolic and hyperbolic problems in spatial domains $\Omega\subset\mathbb{R}^d$ with $d\leq 3$, these methods are near-optimal in appropriate Sobolev norms. The difficulty arises as $d$ grows. A uniform grid with $C$ points per coordinate axis requires $O(C^d)$ degrees of freedom (DoFs). Direct solution of the resulting linear system carries cost $O(C^{3d})$. At $d=6$ and $C=100$, the DoF count alone is $10^{12}$. The solver cost is astronomical at $O(10^{36})$. This is the curse of dimensionality \cite{bellman1961}, and it renders grid-based discretization infeasible for any problem in which the relevant dimension exceeds a handful.

This is most pertinent in parametric and multi-query settings. In design optimization, uncertainty quantification and Bayesian inference, the solution must be evaluated over a parameter manifold $\mathcal{P}\subset \mathbb{R}^m$. Each evaluation of a classical solver is a fresh $O(C^{3d})$ computation, and the required number of evaluations typically grows exponentially in $m$. Even when single-query cost is moderate, the aggregate multi-query budget quickly becomes prohibitive. Reduced-order modeling (ROM) was developed to address this. Proper orthogonal decomposition (POD) \cite{sirovich1987} identifies the dominant modes of a solution manifold from a set of high-fidelity snapshots. Reduced basis methods \cite{rozza2008,benner2015} construct certified low-dimensional approximation spaces by greedy selection of representative solutions. Both exploit the empirical observation that PDE solution manifolds -- even with high ambient dimensionality -- often possess low intrinsic dimension. The cost of online evaluation thus reduces to $O(r^3)$ for a reduced space smaller than the full-order model dimension.

These strategies are fundamentally post hoc: they require the offline construction of a full-order model and its repeated solution at chosen parameter values. The parametric sampling itself remains expensive when $m$ is large, and greedy snapshot selection does not, in general, guarantee coverage of the full parameter manifold. The proper generalized decomposition (PGD) \cite{ammar2006pgd,Chinesta2011} employs a different approach, constructing a separated representation of the PDE solution directly within the variational solve, without ever constructing a full-order system. PGD proceeds via greedy single-rank enrichment in an alternating least-squares fashion. At each step, a single additional term $u^{(r)}(x)=\prod_i v_i^{(r)}(x_i)$ is appended to the current approximant. Factors $v_i^{(r)}$ are found by solving a sequence of univariate problems. PGD is the closest classical antecedent to the VSNA: it seeks a separated representation, exploits alternating optimization and operates directly within a variational framework. However, it remains limited: the separated form is fixed a priori as a rank-one-per-enrichment CP product with no learnable atom architecture. It requires an explicit variational formulation and cannot be trained from data alone. It provides no mechanism for controlling variable interaction order. Its reliance on a greedy training strategy prevents communication between rank components, often requiring a higher rank to reach a given accuracy than global training would.

\subsection{Tensor decompositions and interaction-structured representations}

The three canonical tensor formats provide the foundation for separable representations of discrete arrays. The canonical polyadic (CP) decomposition \cite{hitchcock1927,kolda2009} writes a $d$-way tensor  $\mathcal{T}\in\mathbb{R}^{I_1\times\cdots\times I_d}$ as a sum of $r$ rank-one outer products $\mathcal{T}=\sum_{j=1}^ra_1^{(j)}\otimes\cdots\otimes a_d^{(j)}$, stored as $O(r\cdot d\cdot I_\mathrm{max})$ floats versus the $O(I_\mathrm{max}^d)$ of the dense format. The Tucker decomposition \cite{tucker1966} generalizes this by coupling mode-wise factor matrices through a core tensor, providing a more flexible but still compact representation. Tensor Train (TT) \cite{oseledets2011} decomposes the array as a chain of third-order cores whose sequential contraction recovers any entry in $O(d\cdot r^2)$ operations, where $r$ is the TT rank. The algebraic properties, computational algorithms and empirical performance of all three formats have been surveyed comprehensively \cite{grasedyck2013}. Their application to numerical linear algebra, high-dimensional quadrature and the discretization of parametric PDEs has been developed \cite{bachmayr2023}. In that context, tensor formats replace the exponentially large stiffness matrices that arise from full tensor-product discretizations with structured approximants with polynomial scaling cost in $d$ and $r$ \cite{khoromskij2012,hackbusch2012}.

Additive structure appears as the theoretical basis for generalized additive models (GAMs) \cite{hastie1990}. Approximants are represented as a sum of univariate, smooth component functions  $f(x)=\sum_{j=1}^d f_j(x_j)$. Each component is estimable from data by backfitting. GAMs are interpretable, tractable and efficient when the ground truth is, or is nearly, additive. Optimal minimax convergence rates for additive nonparametric estimators have been established \cite{stone1985}, making the point: the curse of dimensionality is circumventable when the true function belongs to a structured class -- in this case, additive. The rate of convergence is then determined by the smoothness of univariate components, not by $d$. Higher-order interaction models extend the additive framework by including $k$-way interaction terms, expanding the representable function class at the cost of increased parameter count. The theoretical properties of such models follow from classical ANOVA-style decompositions of the function space $L^2([0,1]^d)$ \cite{hoeffding1948, sobol1993}.

\subsection{Separable, tensorized and structured neural architectures}

The Kolmogorov-Arnold representation theorem \cite{kolmogorov1957} establishes that every continuous function $f:[0,1]^d\to\mathbb{R}$ can be written as a finite superposition of continuous univariate functions $f(x)=\sum_{q=1}^{2d+1}\Phi_q\left(\sum_{p=1}^d\phi_{q,p}(x_p)\right)$. This result provided early theoretical grounding for the expressive power of shallow networks \cite{cybenko1989,hornik1991}. Kolmogorov-Arnold networks (KANs) \cite{liu2025kan} make use of this theorem's constructions. As opposed to the fixed, nonlinear activation functions at nodes, KANs place learnable spline-parameterized univariate functions at the edges of a multilayer graph, yielding approximators that are more interpretable, parameter-efficient and empirically exhibit faster scaling laws than multilayer perceptrons of comparable expressive power. In the language of this work, the spline atoms of a KAN are univariate subatoms with $|S|=1$. The architecture nevertheless differs in a fundamental aspect. Namely, KAN composition follows the monolithic superposition structure of the Kolmogorov theorem: outer functions act on sums of inner functions, with no explicit control over interaction arity. The resulting architecture carries no rank parameter, no interaction tensor and no variational theory. KANs have been applied to PDEs in a PINN-adjacent collocation setting, but carry no variational theory connecting the architecture to the underlying function space.

Separable PINNs (SPINNs) \cite{cho2023} observe that representing the PINN solution as a sum of outer products of per-axis univariate networks -- a rank-$r$ CP ansatz evaluated pointwise -- dramatically reduces the cost of PDE residual evaluation. By exploiting automatic differentiation over the Kronecker-product structure of the separated collocation grid, SPINN achieves a reduction in residual computation cost from $O(C^d)$ to $O(C\cdot d\cdot r)$ per pass. This enables collocation point counts exceeding $10^7$ on a single GPU. Whilst the empirical gains are substantial, this separation is a computational device rather than an architectural design principle. The rank $r$ is a tuning parameter without theoretical status, there is no notion of interaction order and no variational framework connects the separated ansatz to the PDE's underlying function space. Thus, approximation-theoretic guarantees are entirely absent.

The hierarchical deep learning neural network (HiDeNN) \cite{saha2021hidenn} and interpolating neural network (INN) family \cite{park2025unifying} are this work's most direct antecedents, taking a complementary approach. Its structure directly mirrors finite element interpolation, reproducing partition-of-unity properties, nodal interpolation and automatic mesh refinement within a differentiable learning framework. The finite element heritage gives HiDeNN a connection to the classical variational setting, and makes outputs interpretable in mechanistic terms. HiDeNN-TD \cite{zhang2022hidenntd} extends this to the tensor-decomposed regime, separating the trial function over spatial and temporal coordinate axes, solving the resulting sequence of low-dimensional systems. C-HiDeNN-TD \cite{li2023chidenn_td,guo2024chidenn_td} achieves higher-order convergence via convolution patch functions and has been demonstrated in transfer learning \cite{mostakim2025inntl}. Building upon this, Ex-HiDeNN \cite{batley2025separable, saha2024dissertation,liu2026mci} constructs a symbolic surrogate by assigning univariate expressions to each learned coordinate axis. Across this family, the network graph is decomposed along coordinate axes rather than defining a functional class with explicit rank and interaction hyperparameters. No formulation provides well-posedness or approximation-theoretic guarantees over the full spatiotemporal--parametric domain. The SNA/VSNA generalizes this framework, providing the missing guarantees.

\subsection{Neural approximators for PDEs}

The idea of representing PDE solutions via neural network predates the deep learning era. Proposed as trial functions for boundary value problems, PDE residuals were penalized at interior collocation points and boundary conditions were encoded analytically in the ansatz \cite{lagaris1998}. Physics-informed neural networks (PINNs) \cite{raissi2019} systematized this approach. A single multilayer perceptron (MLP) is trained by minimizing a composite loss: a strong-form PDE residual, boundary, and initial condition losses and -- optionally -- measured data. Their theoretical basis has been substantially developed \cite{karniadakis2021, Cuomo2022}.

Operator learning generalizes the mapping from data to function. Rather than approximating a single solution, such architectures approximate the operator that maps input function spaces (forcing terms, boundary data, initial conditions) to solution spaces. DeepONet \cite{lu2021deeponet} separates the operator into branch and trunk networks, encoding the input function and output query point respectively. The Fourier neural operator (FNO) \cite{li2021fno} implements the operator in Fourier space, parameterizing a kernel integral operator by its spectral representation and evaluating it efficiently via the fast Fourier transform (FFT). Both architectures show strong performance on benchmark PDE families and applied engineering problems \cite{huang2023introduction}, serving as typical comparison points.

The structural gap these approaches share is the absence of any inductive bias reflecting separability or low-rank structure in the underlying problem. A standard PINN, DeepONet or FNO is a monolithic composition of linear maps and nonlinearities. They contain no prior favoring solutions that are approximately separable or that admit low-interaction representations. This implies that these architectures allocate parameters to model interactions of all orders, including those that contribute negligibly to the solution, resulting in commensurately large parameter counts. Crucially, these architectures do not connect to the Galerkin framework that underpins FEM and spectral methods. There is no analogue of Céa's lemma, no quasi-optimality bound, no mechanism for relating network architecture to approximation-theoretic properties of the trial space -- hyperparameter choices are arbitrary, with little predictable effect on solution accuracy. The VSNA is designed to close precisely this gap.

The gap these lines of work collectively expose is the absence of a unified architectural formalism that simultaneously
\begin{itemize}
    \item controls interaction order $k$ and tensor rank $r$ as explicit, theoretically meaningful hyperparameters;
    \item equips the resulting function class with universality and convergence guarantees rooted in classical approximation theory;
    \item embeds this class within a rigorous Galerkin variational framework over coupled spatiotemporal--parametric domains; and
    \item is applicable as a data-driven regression primitive and a PDE trial space.
\end{itemize}
The following section addresses all four simultaneously.

\section{Methods}
\label{sec:methods}

\subsection{Mathematical foundations of SNA}

\paragraph{Preliminaries}
For the present work, the ambient dimension is denoted $d\in\mathbb{N}$ and the domain $[0,1]^d$. An interaction hyperparameter $k\leq d$ is introduced, confining the maximum order of coordinate interaction. Denoting the index set $[d]:=\{1,\dots,d\}$, any subset $S\subseteq[d]$ with $|S|\leq k$, $x_S$ denotes the projection of $x$ onto $S$. A hyperparameter $r$ constrains interaction rank. Let $\rho:\mathbb{R}\rightarrow\mathbb{R}$ be an activation function. 

\paragraph{Foundational constituents} The architecture is constructed from learnable functional components, herein termed \emph{atoms}. Formally, an atom is defined as the learnable function $\phi^{(S)}:[0,1]^{|S|}\times\Theta_S\to\mathbb{R}$, parameterized by $\theta_S$. Interactions between these atoms are governed by an \emph{interaction object} $\mathcal{C}$; this is a collection of coefficients $c_S$ assigned to specific subsets of coordinates $S \subseteq [d]$, where $\operatorname{Supp}(\mathcal{C})=\{S\subseteq[d]\mid c_S\neq 0\}$. This object stores the abstract set of permissible coordinate interactions within the model. In particular, $\mathcal{C}$ admits a canonical embedding into a $k$-sparse, order-$d$ interaction tensor via the mapping $\mathcal{E}:\mathcal{C}\to\mathbb{R}^{d\times\dots\times d}$ with $\mathcal{E}(\mathcal{C})=\mathcal{T}$. The \emph{rank} of this interaction tensor $\mathcal{T}$ is strictly bounded by $r$, and $\mathcal{T}$ is nonzero only on entries indexed by subsets $S$ where $|S|\leq k$.

\paragraph{Separable Neural Architecture}
A Separable Neural Architecture (SNA), parameterized by $\Theta=\{\theta_S,c_S\}$, is a mapping $f:[0,1]^d\times\Theta\to\mathbb{R}$ defined as 
\begin{align}
    \boxed{f(x;\Theta)=\rho\left(\sum_{S\in\operatorname{Supp}(\mathcal{C})}c_S\phi^{(S)}(x_S;\theta_S)\right).}
\end{align}
This function represents an element of the functional class
\begin{align}
    \mathcal{F}_{k,r}=\left\{f(x, \Theta)\mid\operatorname{rank}(\mathcal{E}(\mathcal{C}))\leq r \textrm{ and } |S|\leq k,\forall S\in\operatorname{Supp}(\mathcal{C})\right\}.
\end{align}
The architectural interpretation of this definition is shown in Figure \ref{fig:sna}. The SNA operates through three stages:
\begin{inparaenum}[(i)]
    \item the ambient input vector $x$ is sliced into localized coordinate subsets $S$ of maximum arity $k$; 
    \item these subsets are evaluated in parallel by a dictionary of independent, learnable functional blocks (the atoms $\phi^{(S)}$); and 
    \item the atomic outputs are aggregated by weighted sum as prescribed by the interaction object $\mathcal{C}$, prior to passing through the global activation function $\rho$. 
\end{inparaenum}
By decoupling the functional complexity (assigned to low-dimensional atoms) from the interaction complexity (gated by a low-rank tensor budget $r$), the SNA remains compact.

\paragraph{Recovery of model families}
This definition generalizes several contemporary model families based on the interaction order $k$:
\begin{itemize}
    \item \textbf{Generalized additive models.} Recovered when $k=1$;
    \item \textbf{Pairwise interaction models.} Recovered when $k=2$;
    \item \textbf{Canonical Polyadic decomposed models.} Recovered when $k=|S|=d$. Atoms are products of univariate subatoms and $\operatorname{rank}(\mathcal{E}(\mathcal{C}))\leq r$.
\end{itemize}

\paragraph{Tensorized classes}

A subclass of SNAs, the tensor decomposition (TD) class is defined by setting $k=d$ and restricting $\operatorname{Supp}(\mathcal{C})$ to the single full-index subset $S=[d]$. The interaction object is embedded as a tensor decomposition with $\mathcal{C}=\mathcal{E}^{-1}(\mathcal{T})$ and rank $r$. An element of this class takes the form 
\begin{align}
    f(x;\Theta_{TD})=\rho\left(\sum_{j=1}^rc^{(j)}\phi^{(j)}(x; \theta^{(j)})\right),
\end{align}
where each rank component is evaluated by a multivariate atom $\phi^{(j)}$. This class specializes further into its most fundamental subclass, the Canonical Polyadic (CP)-class. Here atoms are restricted to be products of univariate subatoms $\psi^{(j)}_i$. An element of this class is written
\begin{align}
    f(x;\Theta_{CP})=\rho\left(\sum_{j=1}^rc^{(j)}\prod_{i=1}^d\psi_i^{(j)}(x_i; \theta_i^{(j)})\right).
\end{align}
Crucially, this structural restriction does not forfeit expressivity. Consider the functional class of CP-SNAs $\mathcal{F}^r_{CP}$ under identity activation $\rho(x)=x$. Then, if each subatom $\psi^{(j)}_i$ is continuous on the unit segment, the union over finite ranks $\mathcal{F}=\bigcup_{r=1}^\infty\mathcal{F}^r_{CP}$ is dense in $C\left([0,1]^d\right)$ with respect to the infinity norm $\|\cdot\|_\infty$. $L_p$-convergence, that $\mathcal{F}$ is dense in $L^p\left([0,1]^d\right)$ for any $p\geq1$, follows from the density of $C([0,1]^d)$ in $L^p([0,1]^d)$.

Since the CP class is contained within the TD class, it follows as corollary that TD-class SNAs are also dense in $C\left([0,1]^d\right)$. This includes the Tucker \cite{tucker1966} and Tensor Train \cite{oseledets2011} decompositions. Notably, then, every target function can be approximated arbitrarily well by some TD-class SNA with finite rank $r$.

\subsection{Variational Separable Neural Architectures}
\label{sec:VSNA}

\paragraph{Variational domains and trial spaces}

In extending this formalism to variational problems, let $\Omega \subset \mathbb{R}^n$ be an $n$-dimensional spatial domain, $[0, T]$ a temporal interval and $\mathcal{P} \subset \mathbb{R}^m$ a parametric space. The combined domain is the Cartesian product $\mathcal{X} = \Omega \times [0, T] \times \mathcal{P}$, with ambient dimension $d = n + 1 + m$. Coordinates are indexed $i \in \{1, \dots, d\}$ covering space, time and parameters. The separable trial space is the Hilbert tensor product $V=\bigotimes_{i=1}^dV^{(i)}$ where $V^{(i)}$ is an appropriate Hilbert space over the $i$-th coordinate. For a semi-linear form $a:V\times V\rightarrow\mathbb{R}$ and linear functional $\ell:V\rightarrow\mathbb{R}$, $u\in V$ is sought such that $a(u,v)=\ell(v)$ for all $v\in V$.

\paragraph{Variational Separable Neural Architecture}

A Variational Separable Neural Architecture (VSNA), parameterized by $\Theta$, is a trial function $u \in V$ defined as
\begin{equation}
    \boxed{u(x; \Theta) = \sum_{S\in\operatorname{Supp}(\mathcal{C})} c_S \phi^{(S)}(x_S;\theta_S)}
\end{equation}
where each atom $\phi^{(S)} \in V^{(S)} = \bigotimes_{i \in S} V^{(i)}$ respects the local variational structure of its coordinates. This trial function represents an element of the finite-dimensional approximation subspace,
\begin{equation}
    \mathcal{F}_{k,r} = \left\{ u(x, \Theta) \in V : \operatorname{rank}(\mathcal{E}(\mathcal{C})) \leq r,\ |S| \leq k,\forall S\in\operatorname{Supp}(\mathcal{C})\right\}.
\end{equation}

\paragraph{Variational tensorized classes}

The class of CP-VSNA trial functions is defined by:
\begin{align}
    \mathcal{F}_{r} = \left\{ u(x; \Theta) = \sum_{j=1}^r c^{(j)}\prod_{i=1}^d \psi_i^{(j)}(x_i; \theta^{(j)}_i) \;\middle|\; \psi_i^{(j)} \in V^{(i)},c^{(j)}\in\mathbb{R},\theta^{(j)}_i\in\Theta\right\}
\end{align}
given a learnable parameter set $\Theta$. Thus, $\mathcal{F}_{r} \subset V$ serves as the finite-dimensional Galerkin trial space.

\paragraph{Degenerate variational classes}
The CP-VSNA subsumes several established variational and operator-learning architectures as special cases under rank and interaction constraints on $[d]$.
\begin{itemize}
    \item \textbf{Variational DeepONet:} Partitioning coordinates as $[d]=\mathcal{I}_x\cup\mathcal{I}_\mu$, restricting the CP product to respect this bipartition, and minimizing a variational energy loss recovers the branch-trunk decomposition $u(x\,;\mu)=\sum_{j=1}^r b_j(\mu)t_j(x)$ of V-DeepONet \cite{goswami2022vdeeponet}. A distinction lies in that V-DeepONet operates in the operator-learning setting, mapping input functions to solutions. The VSNA, as stated, maps finite-dimensional parameter vectors. This makes the structural correspondence exact when the input is parameterized by a finite set of coordinates.
    \item \textbf{POD-Galerkin reduced basis:} Fixing the spatial subatoms $\psi_i^{(j)}(\cdot\,;\theta_i^{(j)})$ as precomputed POD modes and training only the parametric factors recovers the classical reduced-basis method \cite{rozza2008,benner2015} as a rank-$r$ VSNA with frozen spatial atoms.
    \item \textbf{PGD:} Greedy rank-one enrichment of the VSNA -- appending a single new CP term at each step and freezing prior terms -- recovers the proper generalized decomposition \cite{ammar2006pgd, Chinesta2011}. Rank-$r$ training strictly generalizes PGD by allowing all rank components to communicate.
\end{itemize}

\paragraph{Variational guarantees}

To establish the classical validity of this trial space, let $a: V \times V \rightarrow \mathbb{R}$ be a bounded and coercive bilinear form with coercivity constant $c_0 > 0$ and boundedness constant $c_1 > 0$. Denoting a linear functional $\ell \in V^*$, and fixing the basis subatoms $\psi_i^{(j)}$, the VSNA formalism satisfies four core variational guarantees:
\begin{itemize}
    \item \textbf{Well-posedness:} The Galerkin approximation, $a(u_{r}, v_{r}) = \ell(v_{r})$ for all $v_{r} \in \mathcal{F}_{r}$, admits a unique solution $u_{r} \in \mathcal{F}_{r}$.
    \item \textbf{Quasi-optimality:} Let $u \in V$ be the unique solution to the exact weak problem. The VSNA Galerkin solution $u_{r}$ is quasi-optimal, bounded strictly by the best approximation within the trial space: $\|u - u_{r}\|_V \leq \frac{c_1}{c_0} \inf_{v_{r} \in \mathcal{F}_{r}} \|u - v_{r}\|_V$.
    \item \textbf{Convergence:} If each univariate subatom family $\psi^{(j)}_i(\cdot~; \theta^{(j)}_i)$ is dense in $V^{(i)}$, then $\bigcup_r\mathcal{F}_{r}$ is dense in $V$ with respect to the Hilbert norm $\|\cdot\|_V$. Consequently, as the interaction rank $r \to \infty$, the approximation error $\varepsilon_r \to 0$, ensuring VSNA Galerkin solutions converge to the exact solution $u \in V$.
    \item \textbf{Stability:} The VSNA Galerkin solution $u_{r}$ satisfies the absolute stability bound $\|u_{r}\|_V \leq \frac{1}{c_0} \|\ell\|_{V^*}$, where the dual norm is defined as $\|\ell\|_{V^*} = \sup_{v\in V\setminus\{0\}} \frac{\ell(v)}{\|v\|_V}$.
\end{itemize}

Under the Lax-Milgram hypotheses, the VSNA trial space $\mathcal{F}_r$ inherits the complete Galerkin apparatus: it is well-posed, quasi-optimal, stable and convergent as $r\to\infty$. Further details on this has been added to \ref{App:A}.

\subsection{Implementation}

The CP-class SNA with subatoms parameterized by B-splines -- introduced under the name KHRONOS \cite{batley2025khronos,sarker2026khronos} -- instantiates this framework as follows.  It adopts identity activation  ($\rho(x)=x$) and unit modal weights ($c^{(j)}\equiv1$). For a spline basis of order $P$ over $C_i$ interior cells in each dimension $i$, each subatom takes the form
\begin{align}
    \psi_i^{(j)}(x_i;\theta^{(j)}_i)=\sum_{c=1}^{C_i+P}\alpha^{(j)}_{i,c}B^{P}_c(x_i).
\end{align}
The extended index $C_i+P$ accounts for domain-exterior ``ghost'' cells required to preserve partition of unity \cite{piegl1997nurbs}. This instantiation -- predictive -- is trained directly from data by minimizing a supervised loss over the subatom parameters $\{\alpha^{(j)}_{i,c}\}$. The commensurate VSNA instance replaces the data-driven training mode with a variational physics loss as established in Section \ref{sec:VSNA}. The resulting variational problem is solved via minimizing the weak residual of the governing operator.

\paragraph{Alternating least squares}

Direct solution of resulting global Galerkin systems $Au=b$ over $d$-dimensional domains requires assembling an operator of size $O(C^d \times C^d)$, where $C$ is the number of per-dimension degrees of freedom. For the high-dimensional spatiotemporal--parametric problem considered, this global assembly is computationally intractable. By enforcing the Canonical Polyadic (CP) structure on the trial space, the globally nonlinear (due to multilinearity of CP parameterization) Galerkin projection is solved via a sequence of linear updates. 

Fixing all dimensions other than the $\ell$-th and solving for $\theta^{(\ell)}$ alone, the local ALS \cite{holtz2012als, kolda2009} system is written
\begin{equation}
    \left({A}^{(\ell)} + \lambda I \right) \mathrm{vec}(\theta^{(\ell)}) = b^{(\ell)}
\end{equation}
where $\lambda$ is a small Tikhonov regularization parameter introduced to ensure strict positive-definiteness of the local operator. 

Crucially, the local stiffness matrix $A^{(\ell)}$ and load vector $b^{(\ell)}$ are assembled without ever constructing the global operators. Instead, they are computed exactly via Kronecker products of one-dimensional integral matrices from the active dimension and the contracted interaction weights from the fixed dimensions. 

Physical constraints are imposed explicitly. Spatial Dirichlet boundary conditions and temporal initial conditions are strongly enforced on the relevant univariate subatoms prior to each local solve. This is possible by lifting the initial condition $u(\cdot,t=0)=u_0$ to write $u=u'+u_0$ and solving for $u'$ with the temporal subatom initially respecting a homogeneous Dirichlet condition. To prevent numerical underflow or overflow across the high-order tensor products, the active subatoms are re-normalized after each update, with the scalar magnitudes symmetrically absorbed into the remaining fixed dimensions.

This iterative procedure sweeps through all $d$ dimensions sequentially until the relative change in the global residual norm falls below a specified tolerance. This process is systematized in Algorithm \ref{alg:als_vsna}. By operating entirely within the factorized latent space, the tensor-native formulation reduces the computational complexity of solving the high-dimensional PDE from $O(C^{3d})$ to $O(I_\mathrm{max} \cdot d \cdot r^3 \cdot C^3)$ for direct solves, reducing further to $O(I_\mathrm{max} \cdot d \cdot r^2 \cdot C)$ under iterative solve of each system \cite{holtz2012als}. $I_\mathrm{max}$ is the number of ALS iterations. In either case, this reduces the exponential scaling bottleneck of traditional grid-based discretizations to polynomial in $d, r$ and $C$.

\begin{algorithm}[H]
\caption{Tensor-Native Alternating Least Squares (ALS) for VSNA Field Solution}
\label{alg:als_vsna}
\begin{algorithmic}[1]
\Require Governing weak operators $\mathcal{A}$, source terms $b$, interaction rank $r$, basis resolution $C$, tolerance $\tau$.
\Ensure Fully optimized coordinate subatom coefficients $\Theta = \{\theta^{(1)}, \dots, \theta^{(d)}\}$.
\State Initialize subatom coordinate modes $\Theta$ and strongly enforce boundary/initial condition lifting ($u \leftarrow u' + u_0$).
\State Compute initial global weak residual norm $e_0$ via multi-dimensional interaction contractions.
\While{relative error $\Delta e > \tau$ and $k \leq I_{\max}$}
    \For{dimension $\ell = 1$ to $d$}
        \State Freeze parameter matrices for all inactive dimensions: $\Theta_{\text{fixed}} \leftarrow \{\theta^{(i)}\}_{i \neq \ell}$.
        \State Contract fixed dimensions over 1D quadratures using element-wise Hadamard products.
        \State Assemble local normal equations matrix $A^{(\ell)}$ and right-hand side vector $b^{(\ell)}$ via Kronecker products of active 1D operators.
        \State Solve the local linear system for the active mode coefficients: $\mathrm{vec}(\theta^{(\ell)}) \leftarrow \left(A^{(\ell)}\right)^{-1} b^{(\ell)}$.
    \EndFor
    \State Compute updated global weak residual norm $e_k$ via operator and source interaction pairings.
    \State $k \leftarrow k + 1$
\EndWhile
\end{algorithmic}
\end{algorithm}

\paragraph{Global optimization}

ALS is the preferred training strategy for variational problems in moderate-to-high dimension, where Kronecker structure enables exact operator assembly. For low-dimensional problems, or data-driven regression where the loss landscape is smooth and parameter count modest, standard gradient-based optimizers -- SGD, Adam, L-BFGS -- applied globally to all subatom coefficients simultaneously are a viable and often faster alternative. Global optimizers allow all rank components to communicate throughout training. For an SNA with $N$ total parameters, these approaches incur the usual costs: $O(N)$ for Adam \cite{kingma2015adam}, $O(mN)$ for L-BFGS \cite{liu1989lbfgs} with $m$ the history length, and $O(N^3)$ for Newton.

\section{Numerical Validation and Scaling Behavior}
\label{sec:results}

The numerical studies presented in this section are organized as follows. Section \ref{sec:convergence} validates the theoretical convergence of the VSNA on a two-dimensional Poisson problem to confirm $p+1$ rates for B-spline subatoms of degree $p$ and super-algebraic convergence for Fourier bases. Section \ref{sec:nonlinear} extends the VSNA to the nonlinear setting via the inviscid Burgers equation to demonstrate robustness beyond coercive bilinear theory. Section \ref{sec:6dVSNA} solves a six-dimensional spatiotemporal--parametric advection--diffusion equation, validating the VSNA as a world model over the full coupled manifold and quantifying the empirical scaling of the VSNA. 

The VSNA is thus validated across the triad of canonical PDE characters: elliptic (Poisson), parabolic (advection--diffusion) and hyperbolic (Burgers).

The SNA is further validated as a data-driven neural primitive in regression and classification. Section \ref{sec:sobol} benchmarks the CP-class SNA on a 20-dimensional regression problem against competing architectures. Section \ref{sec:imagenet} presents a scaling study for the CP-class SNA tasked with classifying images from the ImageNet dataset \cite{deng2009imagenet} pre-encoded by CLIP \cite{radford2021clip}, Section \ref{sec:mnist} presents a scaling study on pretrained latent spaces on the MNIST dataset.

\subsection{Convergence behavior}
\label{sec:convergence}

The theoretical convergence guarantees established in Section \ref{sec:VSNA} are validated on a two-dimensional Poisson equation $-\nabla^2 u = f$ on $\Omega=[0,1]^2$ with homogeneous Dirichlet boundary conditions $u|_{\partial\Omega}=0$. The manufactured solution is
\begin{equation}
    u^*(x,y) = \exp\!\left(-\frac{(x-\tfrac{1}{2})^2
    +(y-\tfrac{1}{2})^2}{2\sigma^2}\right), \quad \sigma=0.05,
\end{equation}
from which the forcing term $f=-\nabla^2 u^*$ is derived analytically. A rank-$2$ CP-class VSNA is trained by minimizing the weak residual via ALS. The $L^2$ error is computed as $\|u_r - u^*\|_{L^2(\Omega)}$ via a tensor-product Gauss-Legendre quadrature grid over $\Omega$. For the Fourier sine basis the number of retained modes per dimension is denoted $C$, as a direct analogue to ``cells''.

Figure \ref{fig:convergence} shows $L^2$ error against number of cells $C$ for B-spline subatoms of degree $p\in\{1,2,3,4\}$, and Fourier sine subatoms. Spline convergence rates of $p+1$ are achieved in each case, in agreement with approximation theory. The Fourier sine basis exhibits super-algebraic convergence, as expected. The choice of subatom basis is thus problem-dependent, with B-splines preferred for problems with limited regularity, discontinuities or non-periodic boundaries. 

\begin{figure}[h!]
    \centering
    \includegraphics[width=0.8\linewidth]{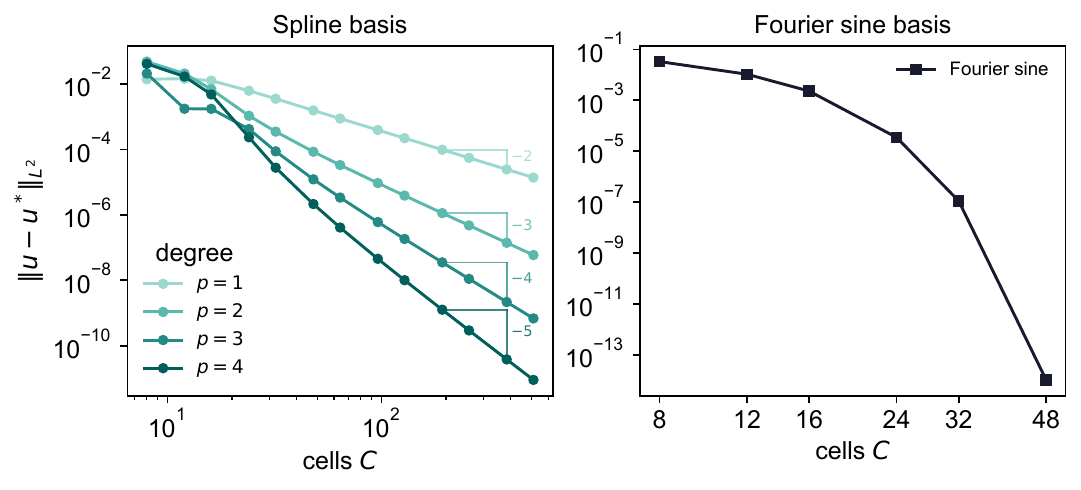}
    \caption{\textbf{VSNA convergence behaviour.} $L^2$ error versus cells $C$ for B-spline subatoms of degree $p\in\{1,2,3,4\}$ (\emph{left}) and Fourier sine subatoms (\emph{right}).}
    \label{fig:convergence}
\end{figure}

\subsection{Nonlinear extension}
\label{sec:nonlinear}

Moving beyond the current theoretical guarantees, the VSNA is evaluated on nonconvex, non-coercive nonlinear equations. Convergence behavior in this regime is demonstrated on the one-dimensional viscous Burgers equation
\begin{align}
    \partial_tu+u\partial_x u=\nu\partial_{xx}u,\quad(x,t)\in[0,1]^2.
\end{align}
Here $\nu$ denotes kinematic viscosity, and the equation is solved subject to homogeneous Dirichlet boundary conditions and initial condition $u(x,t)=u_0$. Two distinct physical regimes are analyzed: the viscous regime ($\nu=0.01$), characterized by a steepening internal layer, and inviscid regime ($\nu=0$) where the field may form shocks. 

The trial space is constructed via a rank-$r$ CP-class VSNA, and strongly enforces the initial condition via lifting,
\begin{equation}
u_r(x,t; \Theta) = u_0(x) + \sum_{i=1}^{r} \alpha_i(x; \theta_i^{(\alpha)}) \gamma_i(t; \theta_i^{(\gamma)}).
\end{equation}
The spatial subatoms $\alpha_i$ are expanded in a basis of $C$ Fourier sine modes, natively satisfying the Dirichlet boundary conditions, and the temporal subatoms $\gamma_i$ are parameterized via $C$ shifted Chebyshev polynomials. The model parameters $\theta=[\alpha,\gamma]^T\in\mathbb{R}^{2rC}$ are determined by minimizing the squared $L^2([0,1]^2)$ weak residual functional $\mathcal{L}(\theta)=\|R(u_r)\|^2_{L^2([0,1]^2)}$. The advection term $u\partial_x u$ induces a non-convex trilinear form, placing this problem outside not only the coercive but the bilinear setting of the established theory.

The weak residual,
\begin{align}
\label{eq:weak}
    \mathcal{R}\{\hat{u}, v\} = \int_0^1\int_0^1 v(
    \partial_t \hat{u} + \hat{u}\,\partial_x \hat{u} -\nu\partial_{xx}\hat u)~dx~dt = 0,
    \quad \forall v \in V,
\end{align}
is minimized over a finite test space. The separable structure of the trial space reduces \eqref{eq:weak} to contractions over one-dimensional quadratures to avoid grid-based assembly. It is notable that this reduction is agnostic to the order of the resulting multilinear form: whether the weak formulation is bilinear, as in linear PDE theory, or trilinear, as here, the trial space accordingly decomposes each integral into the same class of one-dimensional atom contractions. The inherent cost is in the proliferation of cross terms growing as $O(r^n)$ in the trial space rank $r$ and form order $n$. 

\begin{figure}[!h]
    \centering
    \includegraphics[width=\linewidth]{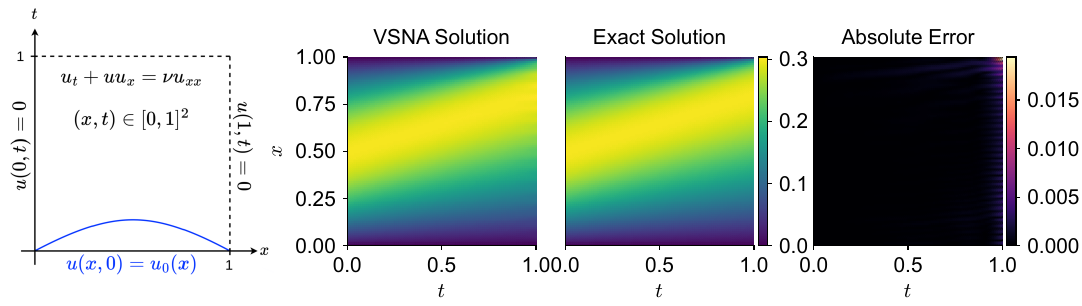}
    \caption{\textbf{VSNA solution of an inviscid Burgers equation.} The nonlinear initial-boundary-value problem schematic (\emph{left}). Filled contours of the recovered solution $u_\mathrm{VSNA}(x,t)$ (\emph{center-left}) and exact solution $u(x,t)$ (\emph{center-right}). Pointwise absolute errors $|u_\mathrm{VSNA}(x,t)-u(x,t)|$ (\emph{right}).}
    \label{fig:burgers}
\end{figure}

\paragraph{The pre-shock inviscid limit}

In the inviscid setting, $\nu=0$, structural dissipation is entirely removed. The weak residual simplifies to
\begin{align}
    \mathcal{R}\{\hat{u}, v\} = \int_0^1\int_0^1 v(
    \partial_t \hat{u} + \hat{u}\,\partial_x \hat{u} )~dx~dt = 0,
    \quad \forall v \in V,
\end{align}
and is minimized over a finite test space via L-BFGS. The result obtained with a CP-class VSNA of rank $r=12$ and $C=12$ cells -- here, Fourier modes in space, Chebyshev modes in time -- is shown in Figure \ref{fig:burgers}. The VSNA accurately recovers the smooth pre-shock solution, with absolute errors below 2\% throughout the domain. The spatially ringing residual structure visible as $x,t\rightarrow1$ is consistent with the convergence of characteristics at the domain corner at an impending shock.

\paragraph{Viscous setting} To provide a definitive metric for convergence, solutions are compared against an exact reference solution $u_\mathrm{exact}(x,t)$, computed via the analytical Cole--Hopf transform. The initial condition is sinusoidal, $u_0=\sin(2\pi x)$, which the VSNA solves by a two-phase optimizer. The first phase warms up with L-BFGS (history $m=20$, strong Wolfe line search) until the weak residual falls below $10^{-4}$. The second phase applies an exact Levenberg--Marquardt Newton method: the full Hessian is materialized via batched Hessian--vector products, factored by Cholesky. Rank continuation initializes each rank from the converged solution at the previous rank via CP mode duplication with additive noise. 

Convergence of the nonlinear approximation under basis and rank refinement is quantified in Figure \ref{fig:nonlinear_convergence}. Along individual rank-isolines, refining the Fourier-Chebyshev basis from $C=4$ to $C=128$ yields a super-algebraic reduction in $L^2$ error, spanning almost seven orders of magnitude. However, this basis refinement plateaus if the network lacks sufficient rank capacity; for example, the $r=4$ rank-isoline saturates prematurely at an error floor of $\approx 3\times10^{-3}$. However, rank refinement beyond $r=32$ yields no noticeable gains, suggesting that approximation error is basis-limited in this regime. Together, these results demonstrate that the nonlinear VSNA retains spectral convergence behavior despite the absence of the coercive bilinear structure underpinning the theory of Section \ref{sec:VSNA}.

\begin{figure}[!h]
    \centering
    \includegraphics[width=\linewidth]{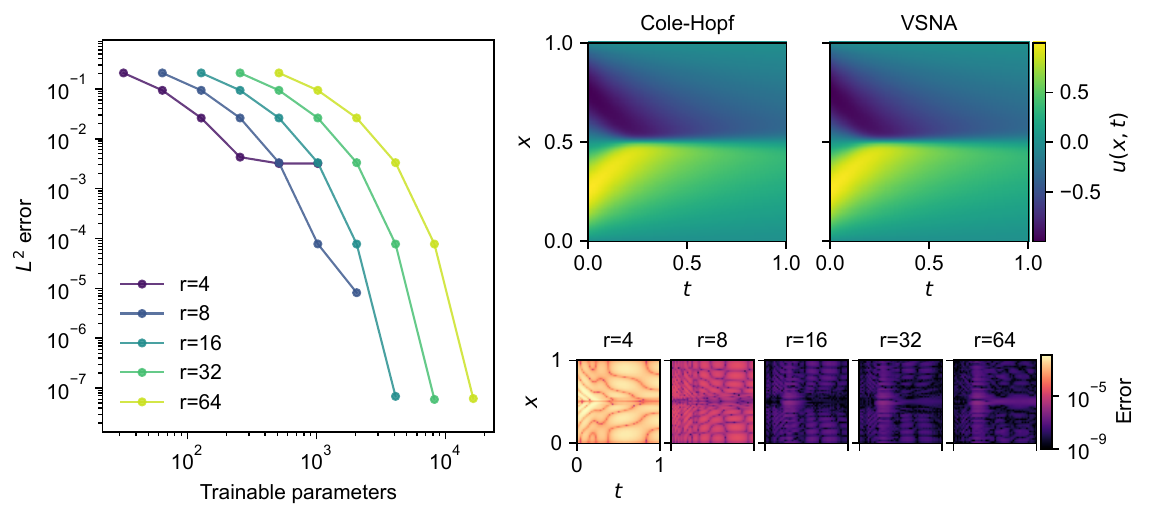}
    \caption{\textbf{Nonlinear VSNA convergence study.} Parametric scaling frontier of $L^2$ error against total trainable parameters (\emph{left}), traced along rank-isolines $r\in\{4,8,16,32,64\}$ with increasing resolution $C\in\{4,8,16,32,64,128\}$. The exact solution, derived via Cole-Hopf transformation, and VSNA approximation (from $r=128,C=32$)(\emph{top-right}) and pointwise absolute error maps for the best solution for each of $r=4,8,16,32,64$ (\emph{bottom-right}).}
    \label{fig:nonlinear_convergence}
\end{figure}

\subsection{Solving high-dimensional spatiotemporal--parametric PDEs}
\label{sec:6dVSNA}

\begin{figure}[!b]
    \centering
    \includegraphics[width=\linewidth]{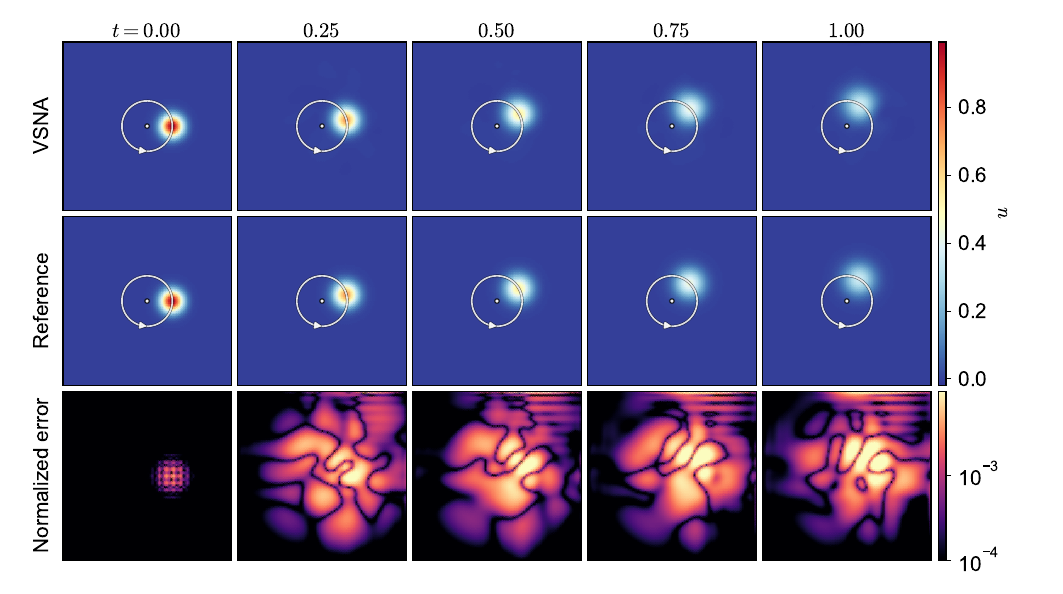}
    \caption{\textbf{Normalized errors.} Spatiotemporal evolution of the field for fixed $\omega=\frac \pi 3$ and $D=0.001$. The top and middle rows compare the VSNA solution with the reference, whilst the bottom row shows the normalized pointwise error $\frac{|u_\mathrm{VSNA}-u_\mathrm{ref}|}{\max|u_\mathrm{ref}|}$.}
    \label{fig:relative}
\end{figure}

The VSNA instance examined is of the same structure as the prior section -- CP-class -- but interpreted variationally as forming a finite-rank trial space over the spatiotemporal--parametric domain. An example six-dimensional spatiotemporal--parametric advection--diffusion system is considered:
\begin{align}
    \frac{\partial u}{\partial t}+\boldsymbol{U}\cdot\nabla u-D\nabla^2u=0.
\end{align}
The field $u$ evolves over spatial coordinates $(x,y,z)\in[0,1]^3$ with homogeneous Dirichlet boundary conditions, as well as time $t\in[0,1]$, angular velocity $\omega\in[0,\tfrac \pi 3]$ and diffusivity $D\in[0.001, 0.01]$. The motion of an initial Gaussian plume is driven by a two-dimensional solid-body rotating wind $\boldsymbol{U}=[-\omega(y-\tfrac1 2),\omega(x-\tfrac1 2),0]^T$. In physical application, such a system might model the
transport and dissipation of a scalar quantity -- energy, aerosols or pollutants -- within a rotating fluid domain \cite{vallis2017atmospheric}.

\begin{figure}[!t]
    \centering
    \includegraphics[width=0.8\linewidth]{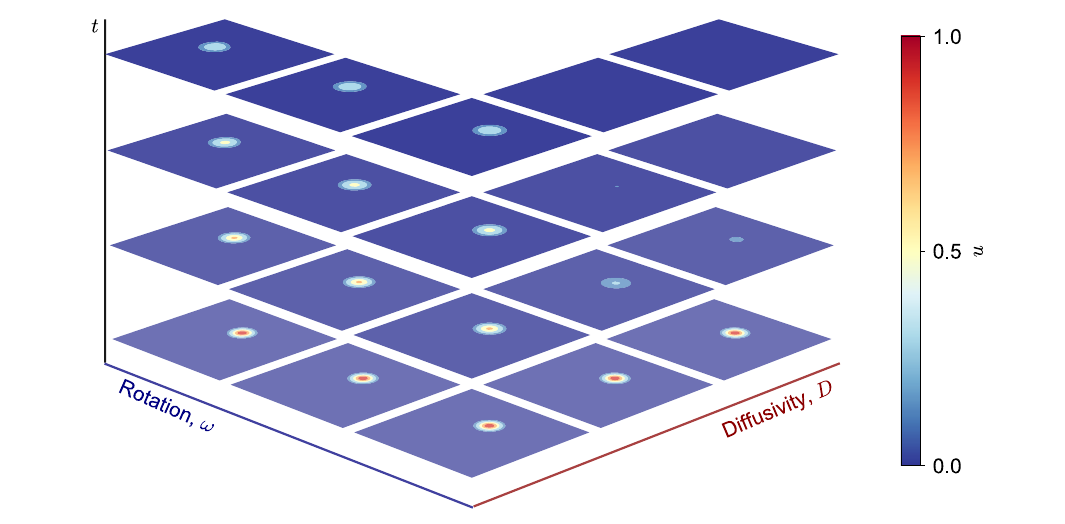}
    \caption{\textbf{Solution manifold.} The six-dimensional spatiotemporal--parametric advection-diffusion field learned by the VSNA. Stacked $x$-$y$ spatial slices across time $t$ are shown across rotation-diffusivity $\omega$-$D$ parameter space, illustrating recovery of the complete solution manifold within a single global representation.}
    \label{fig:manifold}
\end{figure}

Because the full six-dimensional solution manifold is captured as a separable field over all of space, time and parameters, the continuous field can be queried at any location across space, time \emph{and} parameter space. Whereas a classical FEM or standard PINN approach would require a full re-solve for each desired parameter combination, the VSNA provides the full space-time field, queried in milliseconds. Fig. \ref{fig:relative} illustrates representative two-dimensional spatial slices of the learned six-dimensional manifold at $\omega=\frac \pi 3$ and $D=0.001$. 

The VSNA's prediction is compared against the free-space Gaussian solution, which satisfies the governing equation exactly but the Dirichlet conditions only approximately. This proxy takes the form
\begin{align}
    u_\mathrm{ref}(x,y,z,t;\omega,D)&=G_x(x,t;\omega,D)G_y(y,t;\omega,D)G_z(z,t;\omega,D),\\
    G_\xi(\xi,t)&=\frac{\sigma_\xi}{\sqrt{\tau_\xi(t)}}\exp\left(-\frac{(\xi-\mu_\xi(t))^2}{2\tau_\xi(t)}\right),\tau_\xi(t)=\sigma_\xi^2+2Dt,\\
    \mu_x & =\tfrac 1 2 + \cos(\omega t)(\mu_{x,0}-\tfrac 1 2)-\sin(\omega t)(\mu_{y,0}-\tfrac 1 2), \\
    \mu_y & =\tfrac 1 2 + \sin(\omega t)(\mu_{x,0}-\tfrac 1 2)+\cos(\omega t)(\mu_{y,0}-\tfrac 1 2), \\
    \mu_z&=\mu_{z,0}.
\end{align}
Normalized pointwise errors $\frac{|u_\mathrm{VSNA}-u_\mathrm{ref}|}{\max|u_\mathrm{ref}|}$ are shown. The recovered field reproduces both rotational transport and diffusive spread -- albeit mild at this $D$ -- with high fidelity across time. Errors remain smooth and spatially structured. Having established that the CP-class VSNA recovers the coupled spatiotemporal--parametric dynamics, the natural question is how this accuracy scales with computational resources.

Fig. \ref{fig:scaling} quantifies approximation accuracy under joint refinement of rank $r$ and resolution $C$. Along fixed-rank trajectories, the error decreases approximately as $C^{-4}$ -- as is expected with cubic B-splines -- but saturates once rank capacity is reached. This combined effect produces an efficient frontier sustained across four orders of magnitude in trainable parameters $N$. This frontier follows an empirical scaling $\|e\|_{L^2}\approx0.24N^{-0.68}$, consistent with the theoretical convergence rate of $-\frac p d =-\frac4 6$ for cubic B-splines in six dimensions. 

\begin{figure}[!t]
    \centering
    \includegraphics[width=\linewidth]{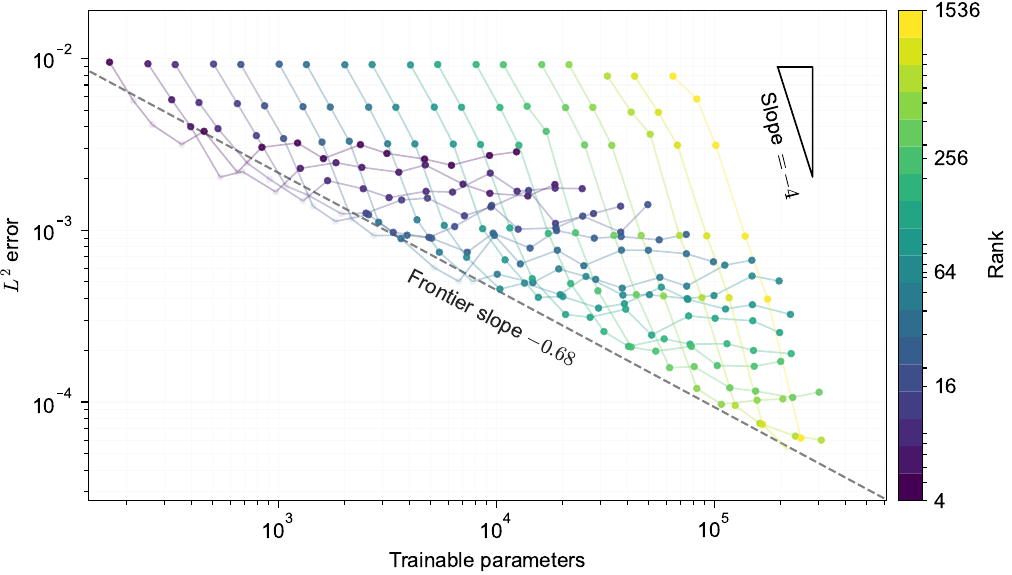}
    \caption{\textbf{VSNA scaling.} Approximation ($L^2$) error versus trainable parameters for the same system under refinement of rank $r$ and resolution $C$. Rank-isolines are connected and colour-coded. Along rank-isolines, errors decrease with resolution at slope $=-4$ before saturating at the rank capacity limit. Across ranks, an efficient frontier emerges (fitted slope $\approx-0.68$ in log-log space), sustained across four orders of magnitude in parameter count.}
    \label{fig:scaling}
\end{figure}

The improved intercept compresses the parameters needed to achieve a target error by six orders of magnitude compared to high-dimensional FEM. At a representative accuracy of $\|e\|_{L^2}=10^{-2}$, the VSNA requires $168$ parameters. For a rigorous computational comparison, a finite element discretization of the six-dimensional domain is constructed at $C=6$ cells per dimension. This yields $6^6=46{,}656$ degrees of freedom. A single solve at this resolution requires 434 seconds on an NVIDIA H200 GPU, requiring 0.084KWh of energy.

To match the precision of the high-resolution VSNA solve $C=32, r=1024$, with an error of $5.3\times10^{-5}$, an equivalent FEM grid requires an extremely dense discretization. Scaling the FEM baseline to a resolution-matched configuration of $C=94$ elements per dimension yields $6.8\times 10^{11}$ global degrees of freedom. This expands to a projected walltime of $4.5\times10^{16}$ years with a direct solver. The energy expenditure associated with a solve this large is on the order of $2.7\times 10^{20}$KWh.

In contrast, the VSNA resolves the entire parameter manifold in a walltime of 111s, consuming just 0.022KWh. Even when compared to a single measured low-resolution FEM evaluation, the VSNA achieves a lower approximation error whilst cutting computational walltime and energy expenditure by $4\times$. Compared to a comparably accurate FEM full-grid solve, the VSNA is $10^{22}$ faster and expends that much less energy. These results are summarized in Table \ref{tab:energy}.

\begin{table}[h!]
\centering
\caption{\textbf{Computational comparison: VSNA versus six-dimensional FEM.} The $C=94$
projection follows from the $O(C^{18})$ complexity of direct solvers.}
\label{tab:energy}
\begin{tabular}{lcccc}
\toprule
Method & DoFs / Parameters & $\|e\|_{L^2}$ & 
Wall time & Energy \\
\midrule
FEM $C=6$ (measured) & $36{,}750$ & 
$1.3\times10^{-2}$ & $434\,\mathrm{s}$ & 
$0.084\,\mathrm{kWh}$ \\
FEM $C=94$ (projected) & $6.8\times10^{11}$ & 
$5.3\times10^{-5}$ & 
$4.5\times10^{16}\,\mathrm{years}$ & 
$2.7\times10^{20}\,\mathrm{kWh}$ \\
\midrule
VSNA ($C=4, r=4$) & $168$ & $9.5\times10^{-3}$ & 
$8\,\mathrm{s}$ & $1.6\times10^{-3}\,\mathrm{kWh}$ \\
VSNA ($C=32, r=1024$) & $215{,}040$ & 
$5.3\times10^{-5}$ & $111\,\mathrm{s}$ & 
$0.022\,\mathrm{kWh}$ \\
\bottomrule
\end{tabular}
\end{table}

\subsection{20-dimensional regression benchmark}
\label{sec:sobol}

The CP-class SNA is evaluated against Random Forest (RF) \cite{breiman2001random}, XGBoost \cite{chen2016xgboost} and a multilayer perceptron (MLP) on a regression benchmark specifically designed to test separable structure exploitation. Model complexity is increased until validation $R^2=0.999$ to provide a consistent saturation point for comparison.

The Sobol-G function,
\begin{align}
    u(p)&=\prod_{i=1}^{20}\frac{|4p_i -2|+a_i}{1+a_i},\\
    a_i&=
    \begin{cases}
        0&\quad\textrm{for }i=1,\dots,5\\
        \frac32&\quad\textrm{for }i=6,\dots,10\\
        4&\quad\textrm{for }i=11,\dots,20
    \end{cases},\\
    p&\in[0, 1]^{20},
\end{align}
is itself fully separable: a product of univariate terms, and therefore lies precisely within the CP-class representation space -- a product of univariate terms is precisely a rank-one CP function. This makes it a direct empirical test of the central claim of this work -- that where separable structure exists, the SNA is the natural primitive to exploit it. Outputs are corrupted by additive noise $\epsilon\sim\mathcal{N}(0,0.01^2)$ and evaluated at 100,000 LHS points. Results are given in Table \ref{tab:sobol}. RF, XGBoost and the MLP all saturate far below the target accuracy; this is not for lack of parameters, but misaligned inductive biases. KHRONOS reaches $R^2=0.9994$ with 1,560 parameters, trained in 5.1 seconds, directly exploiting structure the other models cannot see.

\begin{table}[!t]
\centering
\caption{Regression benchmark on a noisy 20D Sobol-G Function.}
\label{tab:sobol}
\begin{tabular}{lcccc}
\toprule
Metric & Random Forest & XGBoost & MLP & KHRONOS \\
\midrule
Trainable parameters & 8,849,208 & 239,564 & 13,569 & 1560\\
Training time (s)       & 5.4 & 5.6 & 66 & 5.1\\
Test MSE& $4.6\times 10^{-4}$ & $3.3\times 10^{-5}$ & $1.4\times 10^{-4}$ & $6.8\times 10^{-7}$ \\
Test $R^2$              & 0.5565 & 0.7312 & 0.8788 & 0.9994\\
\bottomrule
\end{tabular}
\end{table}

\subsection{Empirical scaling in high-dimensional embedding spaces}
\label{sec:imagenet}

\begin{figure}[!b]
    \centering
    \includegraphics[width=\linewidth]{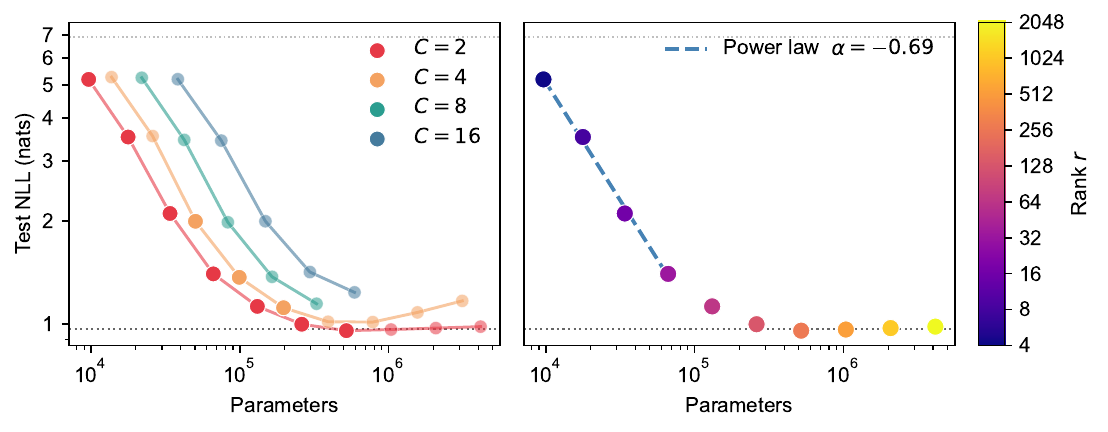}
    \caption{\textbf{Empirical scaling behavior of the SNA on ImageNet CLIP embedding space.} (\emph{left}) Total trainable parameters versus negative log-likelihood (NLL) under rank refinement for fixed $C=2,4,8,16$. (\emph{right}) Isolated power-law scaling fit for the Pareto-dominant $C=2$ configuration.}
    \label{fig:imagenet}
\end{figure}

To evaluate the expressive capacity and scaling behavior of the SNA in high-dimensional supervised learning, the architecture is deployed to map the continuous embedding space generated by a pretrained Contrastive Language-Image Pre-training (CLIP) \cite{radford2021clip} vision encoder on the ImageNet dataset \cite{deng2009imagenet}. This dataset is a large-scale computer vision classification task comprised of 1.2 million training images distributed across $K=1000$ distinct object categories. In particular, the embedding space is generated by CLIP ViT-B/32 with 512-dimensional embeddings $x\in\mathbb{R}^{512}$. These are normalized and bounded into $[0,1]^{512}$ via a sigmoid operator $\bar x=\sigma(\text{LN}(x))$ which serve as inputs to the network, then trained to predict image class labels. 

The SNA is extended to classification over $K$ discrete labels with a multi-output interface. This proceeds by preserving rank outputs without summing, retaining the uncontracted vector $\mathcal{R}(x)\in\mathbb{R}^r$. The conditional probability $\mathcal{P}(y=c\mid x)$ for class $c\in\{1,\dots,K\}$ is parameterized via standard softmax classification $\mathcal{P}(y = c \mid x) =\operatorname{Softmax}(\hat y_c)$. Here, $\hat y_c$ is composed of learned linear combinations of rank outputs $\mathcal{R}_j$ after normalization $\hat{y}_c = \sum_{j=1}^r w_{c,j} \text{LN}(\mathcal{R}(x)_j)$.

The empirical scaling behavior under joint variation of the atom resolution $C$ and rank $r$ is detailed in Figure \ref{fig:imagenet}. The left panel illustrates the effect of rank refinement for fixed basis resolution $C$. In this problem, since contrastive training encourages linear organization of the embedding space, increasing nonlinearity under resolution refinement is strictly parameter-inefficient. $C=2$ is Pareto-dominant, and is illustrated in the right panel. In the pre-asymptotic regime, the architecture exhibits a strict power law scaling over an order of magnitude with empirical convergence rate of $\alpha=-0.69$. However, as the parameter count grows past $10^5$, the convergence trajectory undergoes a phase transition, flattening abruptly over all resolution cases. The model's performance is strictly constrained by the artificial entropy floor of lossy compression imposed by CLIP.

\subsection{Dimension-dependent scaling in latent manifolds}
\label{sec:mnist}

Scaling is further tested in the case of a dimensionally widening latent bottleneck. Latent spaces, generated by a variational autoencoder of increasing dimensionality $d=\{4,8,16,32\}$ are trained on the MNIST dataset. The resulting test negative log-likelihood (NLL) scaling profiles across this regime are illustrated in Figure \ref{fig:mnist}. In the left panel, NLL drops for each $d$ until it encounters an aleatoric uncertainty floor inherent to lossy compression into a latent manifold. For every latent width $d$, the network demonstrates a pre-asymptotic power-law convergence regime; the decay rate appears similar. In the right panel, a power-law fit to this regime reveals an exponent of $\alpha=-1.18$, before tailing off to the entropy floor of $\approx0.09$ nats.

\begin{figure}[!t]
    \centering
    \includegraphics[width=\linewidth]{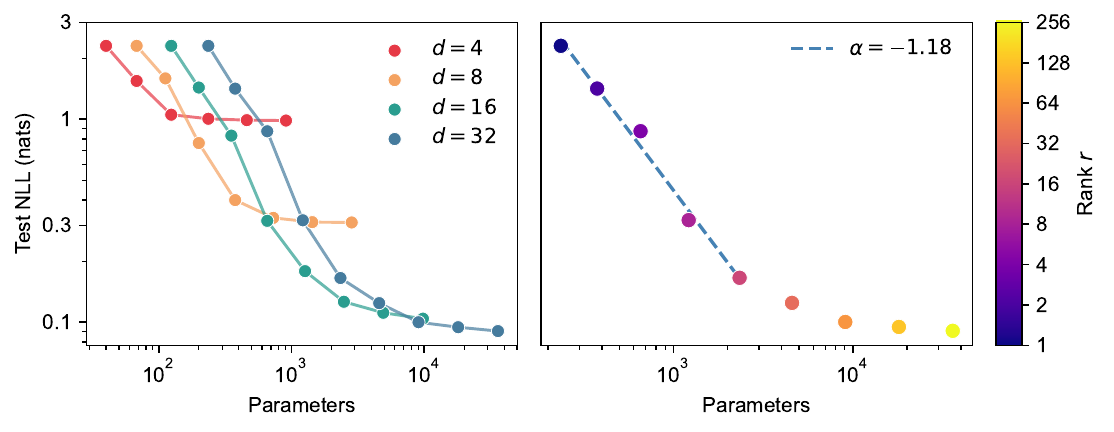}
    \caption{\textbf{Empirical scaling behavior of SNA classification on MNIST from pretrained latent spaces.} Total trainable parameter versus test negative log-likelihood (NLL) (\emph{left}) under fixed quadratic B-spline resolution $C=4$, across expanding dimensions $d$. Isolated power-law scaling fit (\emph{right}) for the $d=32$ manifold with slope $\alpha=-1.18$.}
    \label{fig:mnist}
\end{figure}

\section{Applications in engineering science}
\label{sec:applications}

Having been validated in Section \ref{sec:results}, the present section deploys the SNA and VSNA on two industrial engineering problems where the physical world-modeling capability is a key asset. Namely, the ability of this model to encode the complete solution manifold over space, time and parameters in a single trained object, then query, invert and propagate uncertainty through it -- never re-solving.

Section \ref{sec:lpbf} applies the VSNA to steady-state thermal field prediction in shaped-beam laser powder bed fusion of Stainless Steel 304L, treated here as a seven-dimensional spatiotemporal--parametric problem. Once trained, the model supports probabilistic geometry extraction and inverse process window identification under uncertainty -- tasks that would require millions of FEM re-solves by any classical approach. 

Section \ref{sec:ded} applies the CP-class SNA to process--structure forward and inverse modeling in directed energy deposition (DED) of Inconel 718. Mechanical properties are recovered from thermal histories with five orders of magnitude fewer parameters than prior approaches employing convolutional neural networks. The SNA demonstrates its unique ability to inversely generate plausible thermal histories from target properties in under 100ms on a commodity laptop CPU.

\begin{figure}
    \centering
    \includegraphics[width=\linewidth]{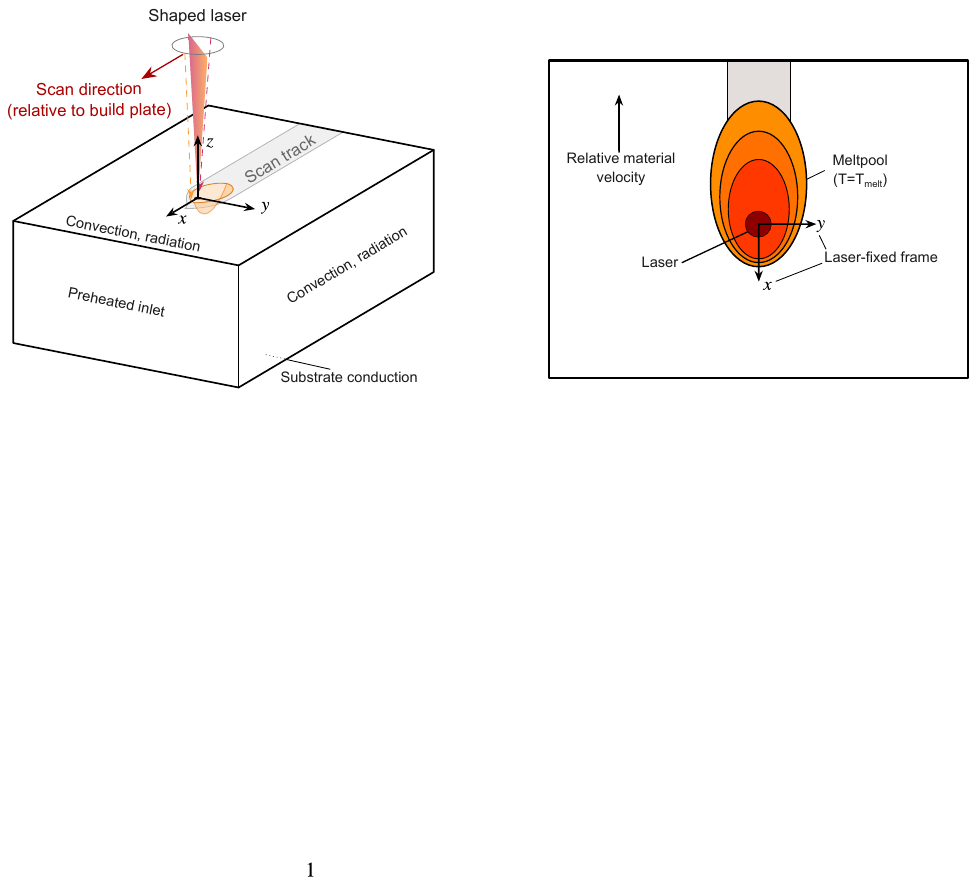}
    \caption{\textbf{Schematic and reference frame for the laser powder bed fusion (LPBF) model}. (\emph{left}) 3D spatial domain and boundary conditions including a preheated inlet, substrate conduction and convection and radiation under a shaped laser beam. (\emph{right}) Top-down view of the laser-fixed Eulerian frame of reference; the meltpool defined by the $T=T_\mathrm{melt}$ isotherm.}
    \label{fig:placeholder}
\end{figure}

\subsection{Designing beam shape in laser powder bed fusion for optimum melt pool under uncertainty}
\label{sec:lpbf}

Engineering the beam shape to obtain optimum microstructure and structure properties are becoming a new frontier for research in fusion-based manufacturing science \cite{abdelmoula2026laser,aydin2025situ,bi2023beam}. Commercially available Laser powder bed fusion (LPBF) machines, such as Laser Tech 30 by DMG MORI \cite{dmgmori_slm_powderbed}, has the capability of printing parts with hybrid beam shape. Although this area is an active field of research, preliminary results show that such beam engineering can produce interesting microstructure. However, this only adds to the already existing challenges in additive manufacturing namely design of process under uncertainty \cite{shang2025accurate,fang2025uncertainty}. Part qualification and control of additive manufacturing is a very high-dimensional problem as there are numerous variables, such as, feedstock material, laser power, scan speed, geometry, design, supply chain logistics, and so on, that can alter the part quality. Solving such a high-dimensional problem, either to predict thermal conditions as the part is build (forward problem) or to design material or process parameters (inverse problem), is a challenge for traditional methods without resorting to data-driven methods. This section presents VSNA as a vehicle to tackle such high-dimensional problem, in particular, how to design the beam shape to control or obtain specific meltpool geometry from first principles.  

The VSNA is applied to model the steady-state thermal fields induced by a moving heat source, focusing specifically on shaped-beam LPBF. In the laser-fixed frame of reference, the temperature deviation $\theta=T-T_\infty$ satisfies the advection--diffusion equation
\begin{align}
    -\rho c_p v\frac{\partial\theta}{\partial x} - \nabla\cdot(k\nabla\theta) = Q_\mathrm{beam}(x,y,z;P,s) \qquad \text{in } \Omega.
\end{align}
The volumetric heat source $Q_\mathrm{beam}$ deposits laser energy through the powder bed as a power-weighted superposition of a Gaussian core and annular ring beam,
\begin{align}
    Q_\mathrm{beam} = \eta P\left[(1-s)G(x,y,z) + sR(x,y,z)\right].
\end{align}
Here $\eta$ is absorptivity and $s\in[0,1]$ is the beam shape parameter: $s=0$ recovers a Gaussian spot, $s=1$ is a pure ring beam. The Gaussian source is separable $G(x,y,z) = g_x(x)g_y(y)g_z(z)$. $g_x$ and $g_y$ are Gaussian profiles of width $\sigma_x$ and $\sigma_y$ centered at the beam position $(x_b,y_b)$. $g_z$ is a depth profile decaying from the surface $z=z_1$ with penetration depth $\sigma_z$. The ring source $R_\mathrm{exact}=r(x,y)g_z(z)$ has radial profile
\begin{align}
    r(x,y)\propto\exp\!\left[-\frac{(\sqrt{(x-x_b)^2+(y-y_b)^2}-a)^2}{2\sigma_r^2}\right],
\end{align}
normalized so that $\int_\Omega R\,d\Omega=1$. This ring is non-separable in $x$ and $y$, and approximated by a rank-$m$ singular value decomposition
\begin{align}
    r(x,y)\approx\sum_{j=1}^m r_j^x(x)r_j^y(y).
\end{align}
The SNA framework naturally accommodates this as an $m$-term CP contribution. On all exposed surfaces, Robin conditions with linearized radiation model heat loss
\begin{align}
    -k\nabla\theta\cdot\mathbf{n} = h_\mathrm{eff}\theta, \qquad h_\mathrm{eff}=h+4\epsilon\sigma_\mathrm{SB}T_\infty^3
\end{align}
with separate coefficients $h_\mathrm{top,eff}$, $h_\mathrm{side,eff}$ and $h_\mathrm{bottom}$ on the top, side and substrate faces respectively. The inlet face imposes a preheating condition $-k\nabla\theta\cdot\mathbf{n}=h_\mathrm{inlet}(\theta-\theta_\mathrm{preheat})$.

\begin{table}[t]
\centering
\begin{tabular}{r|cccc}
\toprule
Model & Scope & Footprint & Walltime & VSNA speedup \\
\midrule
FEM  & Full 7D sweep &
$\approx2\times10^{11}$ DoFs &
$\approx 13$ days &
$1\times$ \\

VSNA & Full 7D manifold &
$20{,}448$ parameters &
$6$ s &
$\approx 190{,}000$ \\

FEM  & Single 3D solve &
$1{,}534{,}825$ DoFs &
$9$ s &
$1\times$ \\

VSNA & Single 3D query &
$20{,}448$ parameters &
$60~\mu$s &
$\approx 150{,}000$ \\
\bottomrule
\end{tabular}
\caption{\textbf{Computational comparison between sparse FEM and the VSNA manifold representation.} The FEM full-sweep estimate corresponds to serial evaluation over the complete $16\times16\times16\times32 = 131{,}072$ point parameter space. VSNA timings correspond to offline manifold training and online batched inference on a $180\times96\times64$ spatial grid. All results computed on a single NVIDIA A100 GPU (40GB).}
\label{tab:lpbf_comparison}
\end{table}

Four parameters are treated as free: scan speed $v$, thermal conductivity $k$, laser power $P$ and beam-shape parameter $s\in[0,1]$. This yields a seven-dimensional spatiotemporal--parametric domain $\mathcal{X}=\Omega\times\mathcal{P}$.

\begin{table}[htbp]
\centering
\small
\begin{tabular}{llll}
\toprule
\textbf{Category / Parameter} & \textbf{Symbol} & \textbf{Value / Range} & \textbf{Units} \\
\midrule
\emph{Material Properties (Stainless Steel 304L)} & & & \\
Density & $\rho$ & $7900$ & $\mathrm{kg/m^3}$ \\
Specific Heat Capacity & $c_p$ & $570$ & $\mathrm{J/(kg\cdot K)}$ \\
Volumetric Heat Capacity & $\rho c_p$ & $4.5\times 10^6$ & $\mathrm{J/(m^3\cdot K)}$ \\
Base Absorptivity & $\eta_0$ & $0.40$ & $-$ \\
Solidus / Melt Temperature & $T_s$ & $1673$ & $\mathrm{K}$ \\
Ambient / Preheating Temperature & $T_\infty$ & $300$ & $\mathrm{K}$ \\
\midrule
\emph{Spatial Domain Details ($\Omega$)} & & & \\
Domain Extents & $(x,y,z)$ & $[-1.5,0.5]\times[-0.5,0.5]\times[0,0.5]$ & $\mathrm{mm}$ \\
Domain Dimensions ($x \times y \times z$) & $-$ & $2.0 \times 1.0 \times 0.5$ & $\mathrm{mm}$ \\
Heat Transfer Coefficient (Top) & $h_{\mathrm{top}}$ & $25$ & $\mathrm{W/(m^2\cdot K)}$ \\
Heat Transfer Coefficient (Sides) & $h_{\mathrm{side}}$ & $25$ & $\mathrm{W/(m^2\cdot K)}$ \\
Linearized Radiation Coefficient & $h_{\mathrm{rad}}$ & $6.12$ & $\mathrm{W/(m^2\cdot K)}$ \\
Effective Top/Side Coefficient & $h_{\mathrm{eff}}$ & $31.12$ & $\mathrm{W/(m^2\cdot K)}$ \\
Substrate Boundary Coefficient & $h_{\mathrm{bottom}}$ & $1000$ & $\mathrm{W/(m^2\cdot K)}$ \\
\midrule
\emph{Parametric Manifold Subspace ($\mathcal{P}$)} & & & \\
Laser Power Range & $P$ & $[100,300]$ (Nominal: $240$) & $\mathrm{W}$ \\
Scan Velocity Range & $v$ & $[0.5,2.0]$ (Nominal: $0.8$) & $\mathrm{m/s}$ \\
Thermal Conductivity Range & $k$ & $[20,35]$ (Nominal: $30$) & $\mathrm{W/(m\cdot K)}$ \\
Beam Shape Parameter Range & $s$ & $[0.0,1.0]$ (Nominal: $0.0$) & $-$ \\
Gaussian Core Characteristic Radius & $\sigma_x,\sigma_y$ & $50$ & $\mu\mathrm{m}$ \\
Annular Ring Core Mean Radius & $a$ & $80$ & $\mu\mathrm{m}$ \\
Annular Ring Profile Width & $\sigma_r$ & $15$ & $\mu\mathrm{m}$ \\
Volumetric Penetration Depth & $\sigma_z$ & $50$ & $\mu\mathrm{m}$ \\
\bottomrule
\end{tabular}
\caption{Thermophysical properties, bounding spatial dimensions and parametric manifold search space configurations.}
\label{tab:lpbf_parameters}
\end{table}

\begin{figure}
    \centering
    \includegraphics[width=\linewidth]{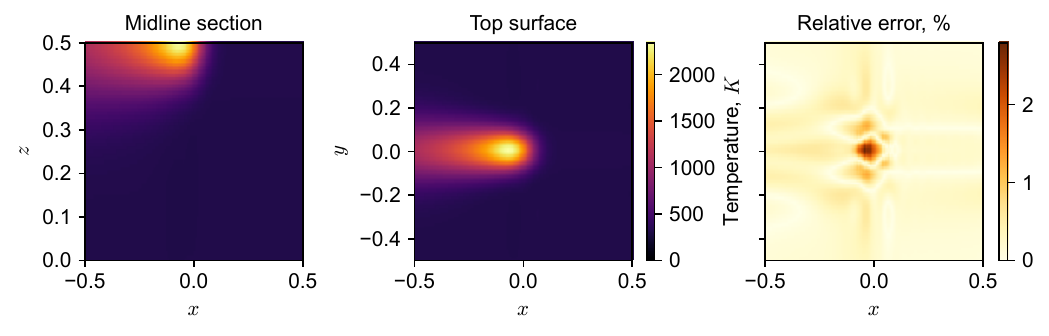}
    \caption{\textbf{VSNA validation against FEM at nominal parameters.} Midline cross-section (\emph{left}) and top surface (\emph{center}) of the learned temperature field $T$ queried at $v=0.8m/s,k=30\frac{W}{mK}, P=240W$ and $s=0$, a pure Gaussian beam. Pointwise relative errors against a finite-element reference (\emph{right}).}
    \label{fig:lpbf_validation}
\end{figure}

\begin{figure}
    \centering
    \includegraphics[width=\linewidth]{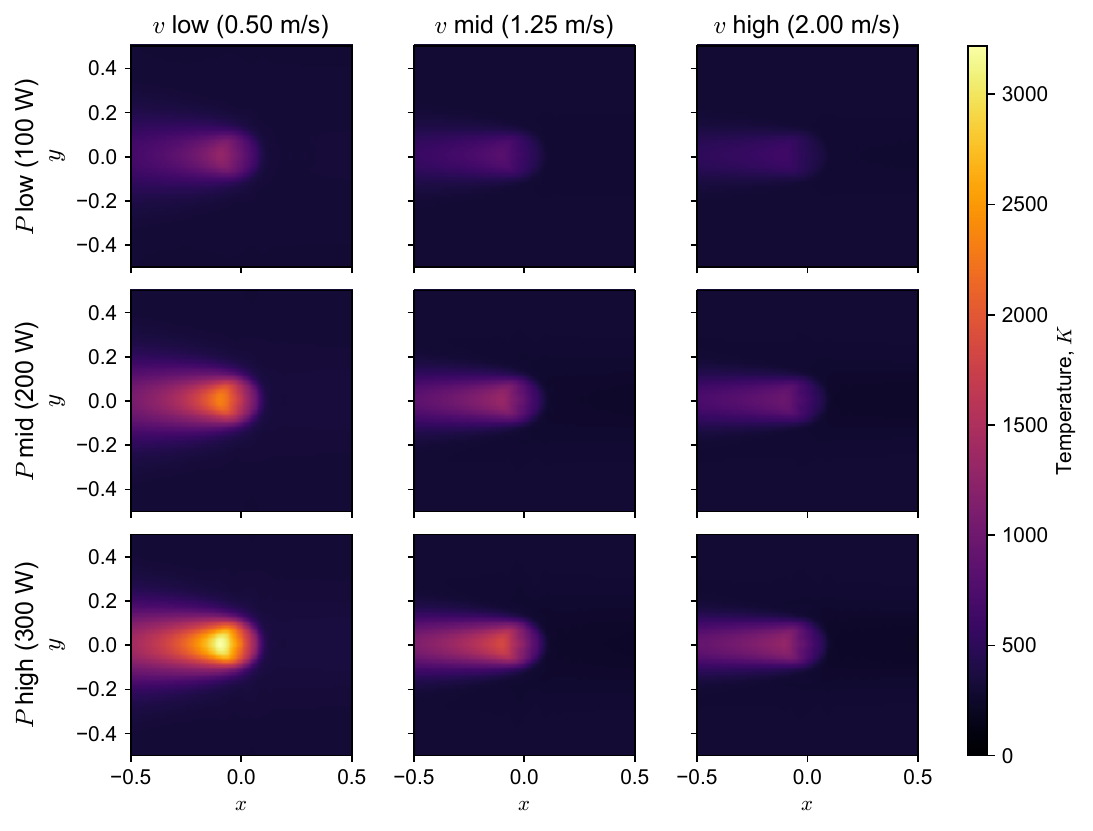}
    \caption{\textbf{Recovered thermal manifold across process parameter space.} Top-surface temperature rise $\theta$ queried from the pretrained VSNA across a $3\times3$ grid of low, medium and high scan speeds $v\in\{0.50,1.25,2.00\}m/s$ and tool power $P\in\{100,200,300\}W$ at fixed $k=27.5\frac{W}{mK}$ and $s=0.5$.}
    \label{fig:lpbf_manifold}
\end{figure}

The VSNA trial space is a CP-class separable representation over all seven coordinates, trained by minimizing the weak residual of the governing operator. The problem is linear with bilinear form with Robin conditions, which is bounded and coercive for $v,k>0$. Thus, all four variational guarantees apply. A reference solution is obtained by the finite-element method at fixed nominal parameters $(v_0,k_0,P_0, s_0)$ for validation. The VSNA, trained once, recovers the complete solution manifold and provides the temperature field at any queried $(v,k,P,s)$.

Once trained, the model supports direct probabilistic evaluation on the learned manifold without the need for resolves. Gaussian uncertainties are introduced in laser power $P=240(1+\epsilon_P),\epsilon_P\sim\mathcal{N}(0,0.03^2)$ and absorptivity $\eta=0.4(1+\epsilon_\eta),\epsilon_\eta\sim\mathcal{N}(0,0.04^2)$. The beam shape parameter is fixed at six discrete settings $s\in\{0,0.2,0.4,0.6,0.8,1\}$ and $k=30\frac{W}{mK}$. For each $s$, 1,000 Monte Carlo samples are drawn from $(\epsilon_P,\epsilon_\eta)$ and queried directly from the pretrained seven-dimensional manifold. Liquidus isotherms -- $T=T_\mathrm{melt}$ -- are extracted from both the top-surface and midline cross-section temperature fields to obtain stochastic meltpool geometries.

\begin{figure}
    \centering
    \includegraphics[width=\linewidth]{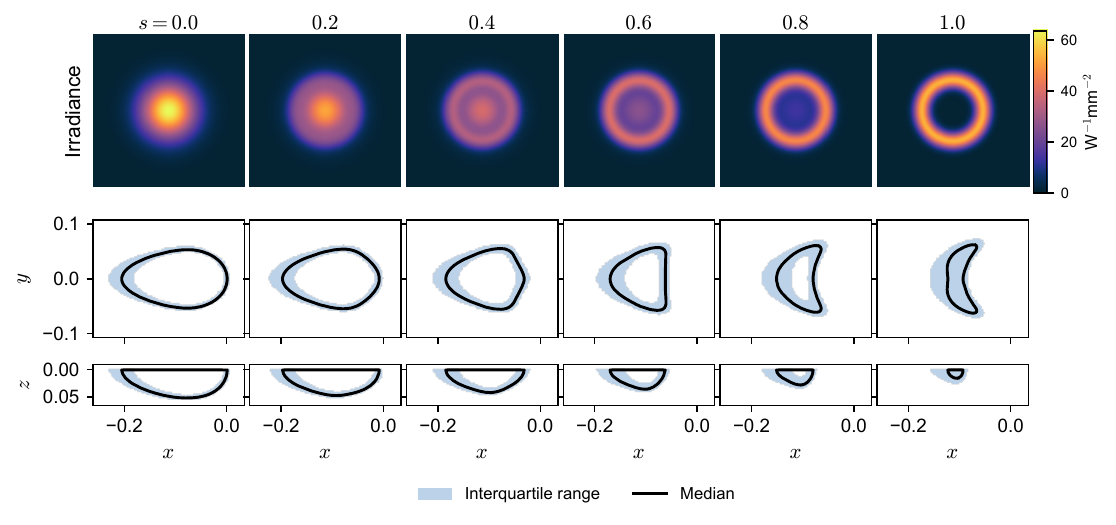}
    \caption{\textbf{Probabilistic meltpool geometries under process uncertainty.} Laser irradiance for the six discrete beam-shape settings $s\in\{0,0.2,0.4,0.6,0.8,1\}$ (\emph{top}). Top-surface (\emph{middle}) and midline cross-section (\emph{bottom}) liquidus isotherms extracted from Monte Carlo queries of the pretrained VSNA for the six beam-shape parameters. The black contour denotes the median boundary; the shaded region the interquartile range.}
    \label{fig:uncertainty}
\end{figure}

Figure \ref{fig:uncertainty} visualizes the resulting occupancy envelopes. The black contour denotes the median molten probability boundary, and shaded regions denote the interquartile range of molten geometries. As $s$ transitions from 0 to 1 -- the beam morphing from a Gaussian spot into a ring -- deep teardrop-shaped meltpools become progressively shorter, shallower and increasingly nonconvex. Simultaneously, the stochastic melt envelopes contract and the annular topology collapses. By $s=1$, the domain becomes simply connected as the central void closes. Lack-of-fusion thus enters the interquartile statistics, with 25\% of samples failing to exceed the liquidus temperature.

\begin{figure}
    \centering
    \includegraphics[width=\linewidth]{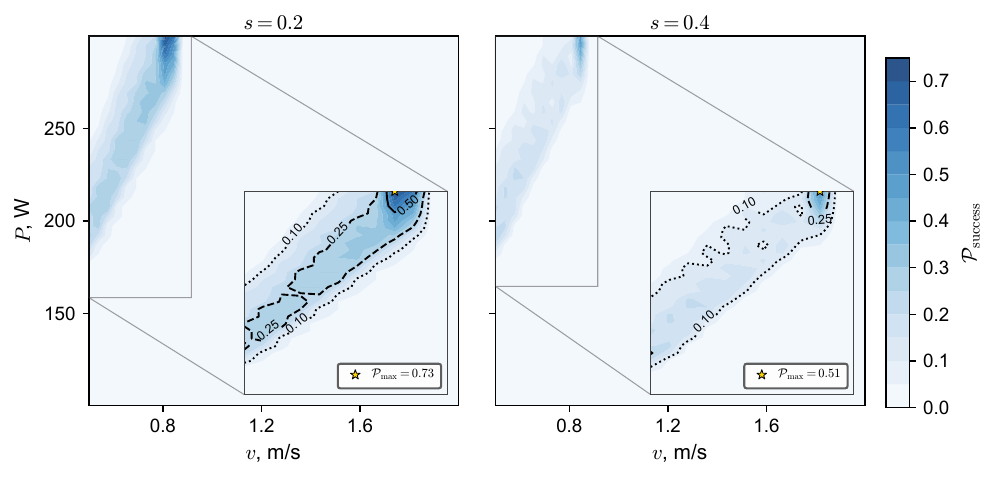}
    \caption{\textbf{Inverse process maps under uncertainty.} Contour maps of the probability ($\mathcal{P}_\mathrm{success}$) of producing a meltpool with width $130\pm10\mu m$ and depth $55\pm 5\mu m$ for combinations of $P\in[100,300]W$ and $v\in[0.5,2.0]m/s$. Only for two beam configurations $s=0.2$ (\emph{left}) and $s=0.4$ (\emph{right}) is this feasible. Contours in the expanded insets chart the reliability transitions $\mathcal{P}_\mathrm{success}\in\{0.10,0.25,0.50\}$. Gold stars identify the probability maximizers.}
    \label{fig:lpbf_inverse}
\end{figure}

The computational efficiency and continuous parameterization of the trained model enable the direct solution of inverse process-design problems conventionally intractable under uncertainty. Here, the combination of scan speed $v$ and laser power $P$ that maximizes likelihood of compliance $\mathcal{P}_\mathrm{success}$ with a target meltpool geometry is desired. The uncertainty is concentrated in absorptivity $\eta$, which is modeled as before $\eta=0.4(1+\epsilon_\eta),\epsilon_\eta\sim\mathcal{N}(0,0.04^2)$. 

The target criteria constraints both meltpool width $W^*=130\pm 10\mu m$ and depth $D^*=55\pm 5\mu m$. A dense parameter grid search is performed over $40\times40$ combinations of $(P,v)$. For each point on this grid, a $\eta$ is sampled at $N_{mc}=100$ points via Monte Carlo. Evaluating the entire grid requires 960,000 individual three-dimensional thermal solutions per beam shape configuration. These are discrete as before, namely $s\in\{0,0.2,0.4,0.6,0.8,1\}$. By exploiting the tensor-native structure of the VSNA surrogate, the entire sweep is vectorized at each query point. The feasibility probability is computed pointwise across the grid as the expected fraction of samples satisfying both geometric bounds concurrently,
\begin{equation}
    \mathcal{P}_{\text{success}}(v, P; s) = \frac{1}{N_{mc}} \sum_{n=1}^{N_{mc}} \mathbb{I}(|W_n - W^*| \leq \Delta W \cap |D_n - D^*| \leq \Delta D).
\end{equation}
$\mathbb{I}$ is the indicator function.

Figure \ref{fig:lpbf_inverse} illustrates the resulting probability maps. Out of all evaluated configurations, viable process windows emerge only for the intermediate shapes $s=0.2$ and $s=0.4$. The standard configurations, Gaussian spot and annular ring, fail entirely; there does not exist a feasible region. The grid search identifies $s=0.2$ as the most robust operational envelope. The global reliability optimum here is $\mathcal{P}_\mathrm{max}=0.730$ at $v^*=0.808m/s$ and $P^*=300W$ yielding a meltpool structure with expected width $W=128.3\mu m$ and depth $D=54.0\mu m$. Increasing the ring weight to 0.4 shifts the scan speed to $v^*=0.846m/s$ with a lower success probability of $\mathcal{P}_\mathrm{max}=0.510$.

It is notable that the entire sweep of almost 1,000,000 distinct three-dimensional evaluations across the parameter space completed in 102s, solely on commodity laptop hardware (Apple M1 CPU, 8GB RAM). A classical approach would take over three months, requiring a million full re-solves each 9s on an A100 GPU.

\subsection{Prediction and inversion in directed energy deposition}
\label{sec:ded}

Following the data-free forward and inverse design example of the previous section, CP-class SNA is next demonstrated on a data-driven process--structure inverse modeling problem investigated originally in \cite{xie2021mechanistic, fang2022data}, linking thermal histories recorded during directed energy deposition (DED) of Inconel 718 thin-walls to spatially varying mechanical properties of the resultant print. Inverse modeling of 3D printed structures has been a challenge for engineers, as seemingly minor perturbations in the process may result in drastically varying mechanical properties within a single part. A schematic of the experimental setup for this problem is presented in Fig. \ref{fig:schematic}. The raw thermal signals, recorded by infrared (IR) sensors, are stochastic and nonlinear, as well as long ($10,000$ time indices), whilst available paired data (yield strength, ultimate tensile strength (UTS), and elastic modulus) are sparse (96 samples). The mechanical properties were determined via uniaxial testing of small tensile coupons at different points of a thin wall built with DED. In this example, we attempt to map back to the thermal history at a specific material point with quantified uncertainty for a target mechanical property. 

\begin{figure}[!t]
    \centering
    \includegraphics[width=0.55\linewidth]{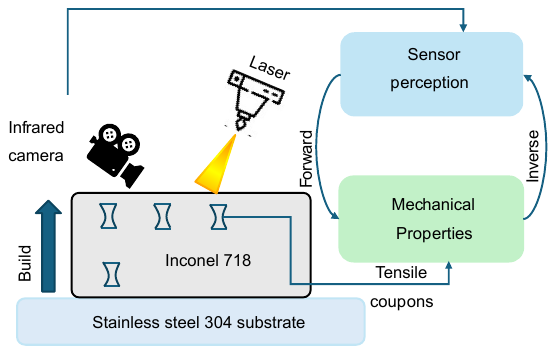}
    \caption{\textbf{Experimental schematic}. A laser directed energy deposition machine builds thin-walled structures layer by layer on a stainless steel 304 substrate, whilst an infrared camera records the evolving thermal field during the build. These measurements are subsequently linked to the mechanical response of the material through tensile testing of extracted coupons.}
    \label{fig:schematic}
\end{figure}

\begin{figure}[!t]
    \centering
    \includegraphics[width=0.55\linewidth]{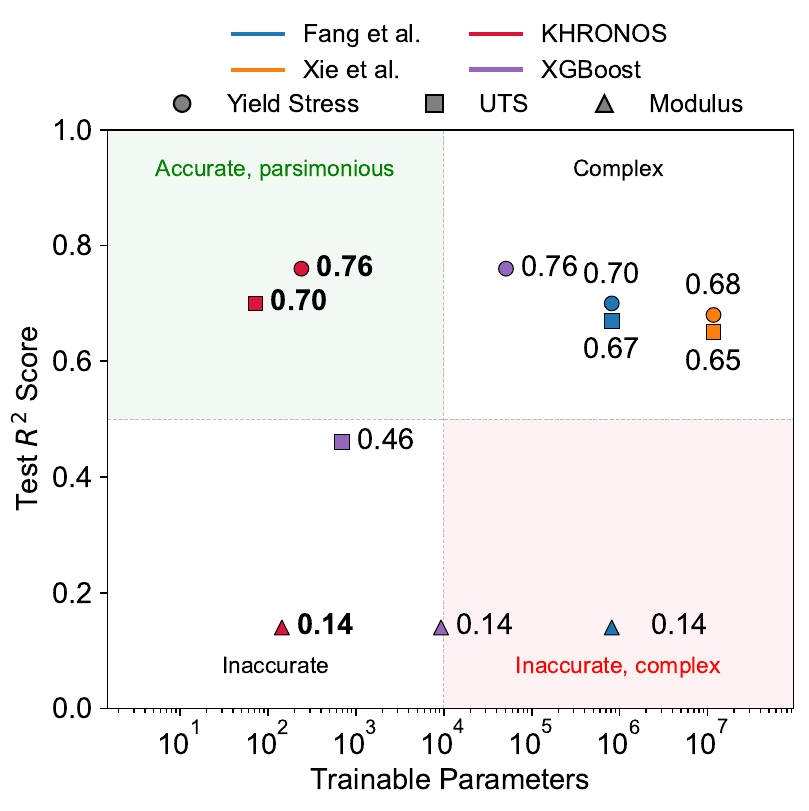}
    \caption{\textbf{Comparative performance.} Predictive performance versus trainable parameters on the Inconel 718 thermal-history dataset. The SNA matches or exceeds prior model accuracy on both yield stress (YS) and ultimate tensile strength (UTS) with up to five orders-of-magnitude fewer parameters \cite{xie2021mechanistic, fang2022data}, and three orders fewer than XGBoost \cite{chen2016xgboost}.}
    \label{fig:comparison}
\end{figure}

\begin{figure}[!t]
    \centering
    \includegraphics[width=\linewidth]{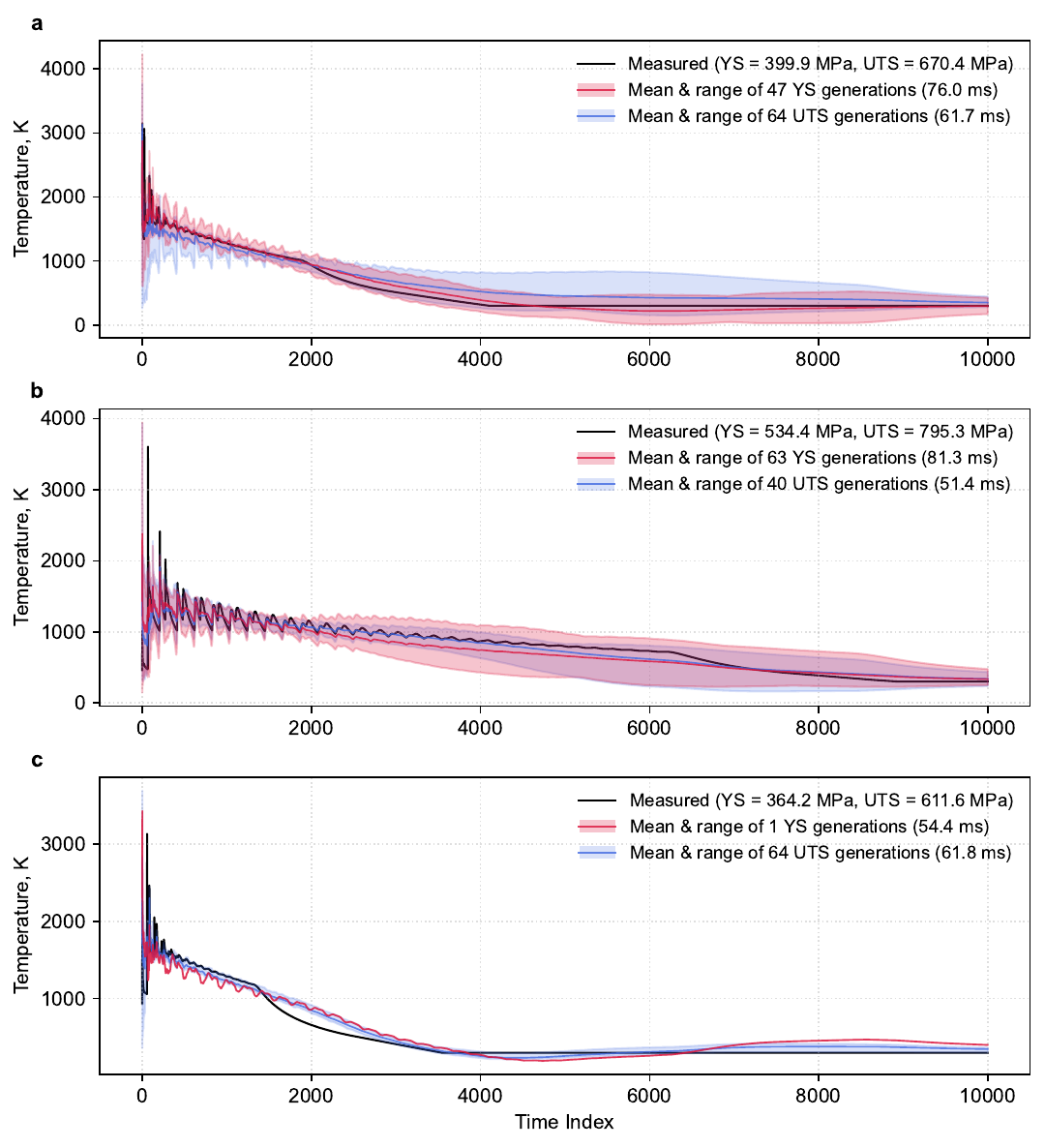}
    \caption{\textbf{Generated thermal histories.} Generative inversion of three target mechanical properties -- yield stress and ultimate tensile stress -- to thermal histories, with mean and range illustrated. This signal can be used as the target during in situ monitoring and control of the DED process.}
    \label{fig:inversion}
\end{figure}

Thermal data preprocessing follows \cite{xie2021mechanistic} beginning with a wavelet transform. Previous works fed these high-dimensional representations into monolithic convolutional neural networks (CNNs) -- with approximately 11 million parameters in the modified ResNet18 of \cite{xie2021mechanistic} and 800,000 in the one-dimensional CNN of \cite{fang2022data}. A subsequent principal component analysis (PCA) is introduced, decomposing the preprocessed representations into a low-dimensional latent space. This coordinate transform reveals the factorizability of the thermal physics. Exploiting this, a CP-class SNA required only 240 parameters for yield stress (YS) and 108 for ultimate tensile strength (UTS) -- a reduction of four to five orders of magnitude. Despite this parsimony, the models achieved test $R^2$ scores of 0.76 (YS) and 0.70 (UTS), matching or exceeding prior approaches; the only model to achieve the highest score on both metrics. A comparative summary, including an XGBoost baseline \cite{chen2016xgboost}, is visualized in Fig. \ref{fig:comparison}. All models exhibit poor performance for the material modulus, saturating at $R^2=0.14$. This is consistent with prior studies: elastic modulus is largely composition-controlled and therefore poorly resolved by thermal histories alone, which are further compounded by IR sensor noise and uncertainties inherent to the deposition process.

Inverting opaque monolithic models requires expensive surrogate optimization or the training of separate inverse networks. The SNAs, however, admits easily traced (even analytic) derivatives and is highly lightweight. These properties enable target mechanical properties to be inverted to plausible thermal histories via a structured Newton search. Multiple initializations produce a low-dimensional manifold of solutions consistent with the queried property. The resulting inversions recover ensembles of thermal histories that closely resemble the ground-truth trajectory with a reasonable uncertainty envelope. This capability is illustrated in Fig. \ref{fig:inversion} across three target pairs of target yield- and ultimate tensile stresses. The search is so lightweight, in fact, that 64 candidate histories are generated in under 100ms on commodity CPU hardware, albeit not all converging. This suggests a route towards an intelligent feedback control system that, deployed in a 3D printing machine, could maintain target mechanical properties throughout the build irrespective of the printing strategy.

\section{Discussion and future horizons}

The theoretical grounding and empirical validation of the SNA in high-dimensional embedding spaces -- its variational counterpart across elliptic, parabolic and hyperbolic PDE manifolds -- demonstrates its versatility as a physical world model. Beyond this applications, the SNA serves as a foundational building block for more complex, composite system architectures. Their structural flexibility has already begun to manifest across three distinct frontiers of deep learning: downstream generative inverse design, sequence modeling and reinforcement learning.

\paragraph{Bidirectional generative--predictive integration}

In the domain of physical engineering and inverse design, a key challenge is in the bidirectional mapping between high-dimensional process design spaces and desired properties. A single-dimensional example is presented in Section \ref{sec:ded}, but extension to two-dimensional pixel and three-dimensional voxel requires corresponding spatial inductive biases that the SNA, alone, lacks. The \emph{Janus} framework \cite{batley2026janus} directly leverages the SNA as predictive head embedded within a latent autoencoding generative loop. Inversion of the SNA produces a latent vector given according target properties. Generated vectors are then decoded to produce pixel outputs. This underscores the role of the SNA as an optimization-fluent physical world model capable of guiding generative physics engines. 

\paragraph{Smooth token embedding in sequence models}

Further evidence of the SNA's domain-agnostic capability is in applicability in discrete token sequence modeling. The \emph{Leviathan} Transformer architecture \cite{batley2026leviathan} extracts the core inductive bias of coordinate-wise separability where traditional Transformers suffer from severe parameter bloat and inflexibility due to tied-embedding lookup tables. Vocabulary manifolds benefit from the same low-interaction and smoothness properties shown here to be well-suited to solving PDEs, dramatically accelerating sample efficiency across the long tail of rare data.

\paragraph{Smooth actor-critics in reinforcement learning}

The coordinate-wise decoupling of the SNA provides an immediate structural advantage in continuous control. Spline-based adaptive networks (SPAN) \cite{mostakim2026agile} embeds the architecture directly within actor-critic loops to parameterize the policy and value functions. Standard deep reinforcement learning architectures often suffer from sample inefficiency and optimization instability due to the non-convexity of monolithic neural network policies. The inductive bias of the SNA when combined with a dense preprocessing layer is shown to accelerate policy convergence and parameter-efficiency.

When read collectively, these extensions shine light upon a deeper theoretical symmetry. The mathematical machinery shown to ameliorate the curse of dimensionality in high-dimensional PDEs is similar to that shown to navigate the latent bottlenecks of computer vision, the rare long-tailed distributions of natural language and the non-convex landscapes of reinforcement learning. In each domain, dense neural architectures over-allocate parameter capacity to high-order coordinate interactions that carry negligible physical or statistical significance. By explicitly governing these interactions via $\mathcal{C}$, the SNA better isolates the active low-dimensional manifolds undergirding these systems. Consequently, this architecture transitions deep learning towards a mathematically tractable, sample-efficient and structurally verifiable framework for physical world modeling.

\section{Conclusions}

This work introduced the Separable Neural Architecture (SNA) and its variational extension (VSNA) to bridge the long-standing gap between discrete tensor decompositions and neural approximation theory. The SNA decouples low-dimensional functional complexity -- assigned to localized coordinate atoms $\phi^{(S)}$ -- from global interactivity -- governed by a sparse, low-rank interaction object $\mathcal{C}$ -- to restrict parameter allocation to active coordinate interactions. This structured formulation is a unifying formalism, encompassing, amongst others, additive models, tensor-decomposed models and variational DeepONet. In certain cases, the SNA provides a universal approximation guarantee established via the Stone-Weierstrass theorem.

When deployed as a Galerkin trial space over spatiotemporal--parametric domains $\mathcal{X}$, the resulting VSNA problem provides the standard variational guarantees under the Lax--Milgram theorem. By exploiting a tensor-native alternating least-squares (ALS) optimization algorithm, the computational complexity of solving high-dimensional PDEs is reduced from the exponential cost of grid-based methods to polynomial.

Empirical validation across the canonical triad of PDE characters -- elliptic Poisson convergence, hyperbolic inviscid- and viscous Burgers shock-layer formations and a parabolic six-dimensional parametric advection-diffusion system -- demonstrated agreement with predicted algebraic and spectral scaling laws. The VSNA solved the six-dimensional advection-diffusion with high-fidelity continuous coverage, whilst compressing parameter counts a million-fold compared to linear FEM. When deployed as a data-driven model in regression problems, the SNA efficiently isolates the underlying physics where black-box architectures struggle.

In seven-dimensional laser powder-bed fusion (LPBF) modeling, the VSNA captured the complete steady-state thermal manifold across non-separable adaptive-beam geometries. This enabled a million-query Monte Carlo inverse optimization sweep to run in just 102 seconds on commodity laptop CPU hardware -- a $150{,}000\times$ speedup over traditional finite element multi-query evaluations run on an NVIDIA A100 GPU. The SNA was deployed on a sparse industrial dataset, where it mapped directed energy deposition thermal histories to Inconel 718 tensile properties with five orders of magnitude fewer parameters than the convolutional architectures employed in prior work. Because it admits tractable derivatives, the SNA was shown to be able to inversely generate thermal histories given desired properties in under 100ms on commodity laptop CPU hardware.

The Separable Neural Architecture is thus demonstrated to satisfy the criteria of a physical world model: a single, compact mathematical representation that maps over entire solution manifolds to enable real-time inversion, optimization loops and rapid uncertainty quantification.

\section{Acknowledgments}
S.S. and R.B would like to acknowledge Virginia Tech Start-up Fund, Virginia Tech Foundation Fund, and Prof Ella Atkins Faculty Fellowship for supporting the work. The authors thank Virginia Tech Advanced Research Computing (ARC) for supporting the work by providing computational resources.
\clearpage

\appendix

\section{Universality of tensor decomposed Separable Neural Architectures}\label{App:A}

\begin{theorem}[Universal approximation of CP-class Separable Neural Architectures]
\label{thm:cpuniv}
Let $\Omega=[0,1]^d$. For each coordinate $i\in\{1,\dots,d\}$, let $U^{(i)}\subset C([0,1])$
be a unital subalgebra that is dense in $C([0,1])$ with respect to $\|\cdot\|_\infty$.
Define
\begin{align}
    \mathcal A = \left\{\sum_{j=1}^r \prod_{i=1}^d a^{(i)}_j(x_i): a^{(i)}_j\in U^{(i)}, r\in\mathbb N \right\}.
\end{align}
Then $\mathcal A$ is dense in $C([0,1]^d)$ with respect to the uniform norm.
\end{theorem}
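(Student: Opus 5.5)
The plan is to apply the Stone--Weierstrass theorem on the compact Hausdorff space $[0,1]^d$ directly to $\mathcal A$. I will first check that $\mathcal A$ is a unital subalgebra of $C([0,1]^d)$ that separates points, and then conclude density in the uniform norm.

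\textbf{Step 1 (vector space).} Each element of $\mathcal A$ is a finite sum of products of continuous univariate functions, so it is continuous on $[0,1]^d$. Sums of elements of $\mathcal A$ are again finite sums of elementary products: the two rank indices are concatenated, with $r=r_1+r_2$. A scalar $\lambda$ is absorbed into the first factor, since $\lambda a^{(1)}_j\in U^{(1)}$ because $U^{(1)}$ is a subspace. Hence $\mathcal A$ is a linear subspace.

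\textbf{Step 2 (multiplication).} Take $f=\sum_{j}\prod_i a^{(i)}_j$ and $g=\sum_{l}\prod_i b^{(i)}_l$. Then
\begin{align}
fg=\sum_{j,l}\prod_{i=1}^d a^{(i)}_j(x_i)\,b^{(i)}_l(x_i),
\end{align}
and each product $a^{(i)}_jb^{(i)}_l$ lies in $U^{(i)}$ because $U^{(i)}$ is a subalgebra. So $fg\in\mathcal A$, with rank at most $r_1r_2$.

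\textbf{Step 3 (unit).} Each $U^{(i)}$ contains the constant $1$, so taking $r=1$ and all factors equal to $1$ shows that $1\in\mathcal A$. Thus $\mathcal A$ contains the constants.

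\textbf{Step 4 (point separation).} I expect this to be the only step needing care, because the hypothesis gives density of $U^{(i)}$ rather than explicit separation. Let $x\neq y$ in $[0,1]^d$, and pick a coordinate $i_0$ with $x_{i_0}\neq y_{i_0}$. The coordinate function $t\mapsto t$ lies in $C([0,1])$. By density of $U^{(i_0)}$, there is $a\in U^{(i_0)}$ with $\|a-t\|_\infty<|x_{i_0}-y_{i_0}|/2$, and then $a(x_{i_0})\neq a(y_{i_0})$.

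Now set $h(z)=a(z_{i_0})\prod_{i\neq i_0}1$. This is a rank-one element of $\mathcal A$, and $h(x)\neq h(y)$.

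\textbf{Conclusion.} Stone--Weierstrass (real version) applies: $\mathcal A$ is a unital subalgebra of $C([0,1]^d)$ that separates points on a compact Hausdorff space. Hence $\mathcal A$ is dense in $C([0,1]^d)$ with respect to $\|\cdot\|_\infty$.

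As a remark connecting back to the CP-class statement in the main text, one may take $U^{(i)}$ to be the polynomials. The $L^p$ density then follows, as the paper notes, from the density of $C([0,1]^d)$ in $L^p$.
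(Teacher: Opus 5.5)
Your proposal is correct and follows essentially the same route as the paper: you show that $\mathcal A$ is a unital subalgebra of $C([0,1]^d)$ that separates points, and then apply Stone--Weierstrass on the compact Hausdorff space $[0,1]^d$. Your point-separation step is more explicit than the paper's, which only asserts that density implies separation; you exhibit the separating element by approximating $t\mapsto t$ to within $|x_{i_0}-y_{i_0}|/2$.
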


\begin{proof}
For each coordinate $i\in\{1,\dots,d\}$, let $\tilde U^{(i)}=\operatorname{span}\{\phi^{(i)}_\alpha : \alpha=1,\dots,r\}$ be a function space dense in $C([0,1])$, and let $U^{(i)}=\operatorname{alg}(\tilde U^{(i)})$ be the univariate algebra generated by $\tilde U^{(i)}$, with $1\in U^{(i)}$. Pick arbitrary $f,g\in\mathcal{A}$ such that
\begin{align}
    f(x)=\sum_{j=1}^{r}\ \prod_{i=1}^{d} \alpha^{(i)}_{j}(x_i),~~ g(x)=\sum_{k=1}^{s}\ \prod_{i=1}^{d} \beta^{(i)}_{k}(x_i),
\end{align}
with $\alpha^{(i)}_{j},\beta^{(i)}_{k}\in U^{(i)}$. Then,
\begin{align}
    f(x)+g(x)&=\sum_{j=1}^{r}\ \prod_{i=1}^{d} \alpha^{(i)}_{j}(x_i)+\sum_{k=1}^{s}\ \prod_{i=1}^{d} \beta^{(i)}_{k}(x_i),\\
    &=\sum_{l=1}^{r+s}\prod_{i=1}^d\gamma_l^{(i)}(x_i)\quad\text{where }~\gamma_l^{(i)}\in U^{(i)}.
\end{align}
Therefore $f+g\in\mathcal{A}$, confirming closure under addition. Take an arbitrary element $f\in\mathcal{A}$ as before, and multiply by an arbitrary scalar $\lambda\in\mathbb{R}$ so
\begin{align}
    \lambda f(x)=\sum_{j=1}^{r}\lambda \prod_{i=1}^{d} \alpha^{(i)}_{j}(x_i)=\sum_{j=1}^{r} \prod_{i=1}^{d} \tilde\alpha^{(i)}_{j}(x_i),
\end{align}
with $\tilde\alpha^{(1)}_{j}=\lambda\alpha^{(1)}_{j}$, all else unchanged. Since $U^{(i)}$ is a vector space, $\lambda\alpha^{(1)}_{j}\in U^{(i)}$, thus $\lambda f\in\mathcal{A}$ and $\mathcal{A}$ is closed under scalar multiplication. Pick $f,g\in\mathcal{A}$ as before. Then, their product is
\begin{align}
    (fg)(x)&=\left(\sum_{j=1}^{r}\ \prod_{i=1}^{d} \alpha^{(i)}_{j}(x_i)\right)\left(\sum_{k=1}^{s}\ \prod_{i=1}^{d} \beta^{(i)}_{k}(x_i)\right),\\
    &=\sum_{j=1}^{r}\sum_{k=1}^{s}\prod_{i=1}^{d}\gamma_{jk}^{(i)}(x_i),~ \gamma_{jk}^{(i)}=\alpha^{(i)}_{j}(x_i)\beta^{(i)}_{k}(x_i).
\end{align}
Recall $U^{(i)}=\operatorname{alg}(\tilde U^{(i)})$ is a unital algebra and so it follows by definition that $\gamma_{jk}^{(i)}\in U^{(i)}$. Therefore, $(fg)(x)\in\mathcal{A}$, and $\mathcal{A}$ is an algebra: closed under multiplication. Since $1\in U^{(i)}$, it follows that $\mathcal{A}$ is a unital algebra.

Take two distinct points $x,y\in[0,1]^d$, $x=(x_1,\dots,x_d),~y=(y_1,\dots,y_d), $ with $x_i\neq y_i$ for at least one $i=1,\dots,d$. Now, because $U^{(i)}$ is dense in $C([0,1])$ and $C([0,1])$ separates points, $U^{(i)}$ also separates points: $\exists~u\in U^{(i)}$ with $u(x_i)\neq u(y_i)$. Then the separable function $f(z)=u(z_i)\in\mathcal{A}$ and satisfies $f(x)\neq f(y)$. Therefore, $\mathcal{A}$ separates points.

Any unital subalgebra on $C([0,1]^d)$ that separates points is itself dense in $C([0,1]^d)$, given that $[0,1]^d$ is a compact Hausdorff space. This follows from Stone-Weierstrass. Indeed, $\mathcal{A}$ is dense in $C([0,1]^d)$; for a given $f^*\in C([0,1]^d)$ and any $\varepsilon>0$, density yields $f\in\mathcal{A}$ such that
\begin{align}
    \|f^*-f\|_\infty<\varepsilon.
\end{align}
By definition of $\mathcal{A}$, $f$ is a \emph{finite} sum of products. Therefore, there exists $r$ and functions $\{\phi^{(i)}_{j}\}$ such that
\begin{align}
\sup_{x\in[0,1]^d}\left|f^*(x)-\sum_{j=1}^{r}\prod_{i=1}^{d}\phi^{(i)}_{j}(x_i)\right|<\varepsilon,
\end{align}
which is the desired universal approximation statement.
\end{proof}

\begin{corollary}[Universal approximation of tensor decomposed Separable Neural Architectures]
\label{cor:tduniv}
Let $\mathcal{F}_{TD}$ represent the functional class of tensor decomposed (TD)-class Separable Neural Architectures on $[0,1]^d$. Then $\mathcal{F}_{TD}$ is dense in $C([0,1]^d)$ with respect to the uniform norm $\|\cdot\|_\infty$.
\end{corollary}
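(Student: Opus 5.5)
The plan is to deduce Corollary \ref{cor:tduniv} from Theorem \ref{thm:cpuniv} by a class inclusion. First, I will fix the identification of TD-class SNAs with the functions
\begin{align}
f(x)=\rho\left(\sum_{j=1}^r c^{(j)}\phi^{(j)}(x;\theta^{(j)})\right)
\end{align}
with $S=[d]$, as in the tensorized classes paragraph. I will then show that under identity activation $\rho(x)=x$ and with unit modal weights $c^{(j)}\equiv 1$, every element of $\mathcal{A}$ from Theorem \ref{thm:cpuniv} is realized as a TD-class SNA. Concretely, I take the multivariate atom $\phi^{(j)}(x)=\prod_{i=1}^d a^{(i)}_j(x_i)$ with $a^{(i)}_j\in U^{(i)}$. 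This is exactly the CP-class specialization, so $\mathcal{A}\subseteq\mathcal{F}_{CP}\subseteq\mathcal{F}_{TD}$.

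Second, I will make sure the hypotheses of Theorem \ref{thm:cpuniv} can be met by admissible subatom families. The natural choice is $U^{(i)}$ equal to the univariate polynomials on $[0,1]$. This family is a unital subalgebra, and it is dense in $C([0,1])$ by Weierstrass. Each $\psi^{(j)}_i$ is then continuous on the unit segment, as the CP-class definition requires. The only point needing care is whether the parameterized subatom families actually used by the architecture contain (or uniformly approximate) such an algebra. If the atoms are required to come from a specific family, such as B-splines of fixed order, I will instead use the following fact: the closure of the span of splines over all refinements is $C([0,1])$. Because products of finitely many bounded functions are uniformly continuous in each factor, any $\varepsilon$-approximant in $\mathcal{A}$ can then be perturbed factorwise into that family with controlled error.

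Third, the density transfers immediately. Given $f^*\in C([0,1]^d)$ and $\varepsilon>0$, Theorem \ref{thm:cpuniv} gives some $f\in\mathcal{A}$ with $\|f^*-f\|_\infty<\varepsilon$, and $f\in\mathcal{F}_{TD}$ by the inclusion above. Since $\mathcal{F}_{TD}\subset C([0,1]^d)$, this is exactly density in the uniform norm. The Tucker and Tensor Train cases are covered in the same way. A CP representation is a Tucker representation with superdiagonal core. It is also a TT representation with diagonal cores of size $r$, so CP-realizable functions lie in those subclasses too.

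The main obstacle is definitional rather than analytic: $\mathcal{F}_{TD}$ must be interpreted so that it contains the CP-class under identity activation. If a fixed nonlinear $\rho$ is imposed, I would need $\rho$ continuous with a continuous right inverse on the relevant range, or else argue via $\rho$ non-polynomial. I will state the corollary under identity activation, matching Theorem \ref{thm:cpuniv}.
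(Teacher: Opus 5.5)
Your argument is correct and is essentially the paper's. You realize each element of $\mathcal{A}$ as a TD-class SNA with $\rho(x)=x$ and $c^{(j)}\equiv 1$ via the product atom $\phi^{(j)}(x)=\prod_{i=1}^d a^{(i)}_j(x_i)$, invoke Theorem~\ref{thm:cpuniv} for density of $\mathcal{A}$, and pass density to the superset $\mathcal{F}_{TD}$; your extra checks on admissible subatom families (polynomials, or splines under refinement via factorwise perturbation) and on the Tucker/TT embeddings only make explicit what the paper leaves implicit.

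One caution on your aside about a fixed nonlinear $\rho$: the ``non-polynomial'' route is not the relevant criterion for an activation applied to the output. A bounded $\rho$ such as a sigmoid makes uniform density fail outright, since every output then lies in the range of $\rho$. Committing to identity activation, as you do, is therefore the right resolution.
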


\begin{proof}
By definition, the Canonical Polyadic (CP) class $\mathcal{A}$ is a specific, restricted instantiation of the broader TD-class $\mathcal{F}_{TD}$. The general multidimensional atoms $\phi^{(j)}(x)$ permitted in the TD-class are strictly constrained in the CP-class to be factorized products of univariate subatoms, $\prod_{i=1}^d\phi^{(i)}_{j}(x_i)$. 

Because any valid CP-class function is inherently a valid TD-class function, the functional classes satisfy the strict inclusion relation $\mathcal{A} \subseteq \mathcal{F}_{TD}$. 

Theorem \ref{thm:cpuniv} establishes that the subset $\mathcal{A}$ is dense in $C([0,1]^d)$ under the uniform norm. It is a fundamental property of dense sets that any superset containing a dense subset is itself dense in the same space. Therefore, the superset $\mathcal{F}_{TD}$ must also be dense in $C([0,1]^d)$. 

Consequently, for any target function $f^* \in C([0,1]^d)$ and any $\varepsilon > 0$, there exists an approximator $f \in \mathcal{F}_{TD}$ with a finite rank $r$ such that $\|f^* - f\|_\infty < \varepsilon$.
\end{proof}

\section{Theoretical guarantees of the Variational Separable Neural Architecture}

Let $V$ be a Hilbert space corresponding to the spatiotemporal--parametric domain $\mathcal{X}$, equipped with the norm $\|\cdot\|_V$. Consider a variational problem defined by a bilinear form $a: V \times V \to \mathbb{R}$ and a linear functional $\ell: V \to \mathbb{R}$. The exact weak formulation seeks $u \in V$ such that
\begin{equation}
    a(u,v) = \ell(v) \quad \forall v \in V.
\end{equation}

The standard assumptions of regularity are imposed on the governing operator:
\begin{enumerate}
    \item \textbf{Boundedness:} There exists a constant $c_1 > 0$ such that $|a(u,v)| \leq c_1 \|u\|_V \|v\|_V$ for all $u,v \in V$.
    \item \textbf{Coercivity:} There exists a constant $c_0 > 0$ such that $a(v,v) \geq c_0 \|v\|_V^2$ for all $v \in V$.
    \item \textbf{Bounded linear functional:} $\ell \in V^*$, meaning there exists a constant $M > 0$ such that $|\ell(v)| \leq M \|v\|_V$ for all $v \in V$.
\end{enumerate}

Under these conditions, the Lax-Milgram theorem guarantees a unique exact solution $u \in V$. The finite-dimensional Canonical Polyadic (CP) trial space $\mathcal{F}_r \subset V$ of rank $r$ is constructed from univariate subatoms $\psi_i^{(j)} \in V^{(i)}$, and the Galerkin approximation $u_r \in \mathcal{F}_r$ is sought satisfying:
\begin{equation}
    a(u_r, v_r) = \ell(v_r) \quad \forall v_r \in \mathcal{F}_r.
\end{equation}

\begin{theorem}[Well-posedness]
    The VSNA Galerkin approximation admits a unique solution $u_r \in \mathcal{F}_r$.
\end{theorem}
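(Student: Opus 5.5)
The plan is to reduce the statement to the Lax--Milgram theorem applied on the trial space itself. The key preliminary observation is the standing convention in the statement of the variational guarantees: the basis subatoms $\psi_i^{(j)}$ are held fixed. Under that convention $\mathcal{F}_r$ is the linear span of the finitely many separable products $\prod_{i=1}^d \psi_i^{(j)}(x_i)$, $j=1,\dots,r$, with the coefficients $c^{(j)}$ as the only free parameters. It is therefore a finite-dimensional linear subspace of $V$, of dimension at most $r$.

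\textbf{Step 1: $\mathcal{F}_r$ is a Hilbert space.} I will first record that $\mathcal{F}_r$, with the inner product and norm inherited from $V$, is a Hilbert space. It is a linear subspace by construction, and every finite-dimensional subspace of a normed space is closed, hence complete.

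\textbf{Step 2: the hypotheses pass to the subspace.} Next I restrict $a$ to $\mathcal{F}_r\times\mathcal{F}_r$ and $\ell$ to $\mathcal{F}_r$. Boundedness with constant $c_1$ and coercivity with constant $c_0$ hold for all elements of $V$, so in particular for all elements of $\mathcal{F}_r$. The restricted functional satisfies $|\ell(v_r)|\le M\|v_r\|_V$, so it lies in $\mathcal{F}_r^*$.

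\textbf{Step 3: conclude.} Lax--Milgram on the Hilbert space $\mathcal{F}_r$ then gives existence and uniqueness of $u_r$ directly. As a transparent alternative, I would fix a basis $\{\Phi_j\}$ of $\mathcal{F}_r$ (discarding linearly dependent products) and write $u_r=\sum_j c_j\Phi_j$. The Galerkin problem becomes the square system $Ac=b$ with $A_{kj}=a(\Phi_j,\Phi_k)$ and $b_k=\ell(\Phi_k)$. Coercivity gives
\[
c^\top A c = a(u_r,u_r)\ \ge\ c_0\|u_r\|_V^2 \;>\; 0 \quad \text{for } c\ne0,
\]
so $A$ is injective, hence invertible, which yields a unique $c$. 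Uniqueness can also be seen directly: if two solutions exist, their difference $w$ satisfies $a(w,w)=0$, so $w=0$ by coercivity.

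\textbf{Main obstacle.} The only delicate point is not analytic but structural. If the subatoms were themselves trainable, the set of rank-$r$ CP functions would not be a linear space (nor closed in general), and Lax--Milgram would not apply. I will therefore state explicitly that the theorem is understood with the $\psi_i^{(j)}$ fixed, as stipulated before the list of guarantees. I will also handle possible linear dependence among the products by passing to a basis of the span: uniqueness of $u_r$ as a function then holds, while the coefficient vector $c$ need not be unique.
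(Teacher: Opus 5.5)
Your proposal is correct and follows the paper's own route: with the subatoms fixed, $\mathcal{F}_r$ is a finite-dimensional (hence closed) subspace of $V$. Boundedness and coercivity of $a$, and boundedness of $\ell$, carry over to it, and Lax--Milgram gives the unique $u_r$. Two of your additions make precise what the paper's proof leaves implicit: your remark that the $\psi_i^{(j)}$ themselves must be frozen (otherwise the rank-$r$ CP set is not a linear space), and your matrix-level check $c^\top A c = a(u_r,u_r) \ge c_0\|u_r\|_V^2$.
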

\begin{proof}
    For a fixed rank $r$ and fixed basis representations of the subatoms, the VSNA trial space $\mathcal{F}_r$ constitutes a finite-dimensional subspace of the Hilbert space $V$. Because $\mathcal{F}_r \subset V$, the bilinear form $a(\cdot,\cdot)$ remains bounded and coercive when restricted to $\mathcal{F}_r \times \mathcal{F}_r$. Similarly, the linear functional $\ell$ remains bounded on $\mathcal{F}_r$. By the application of the Lax-Milgram theorem to the finite-dimensional closed subspace $\mathcal{F}_r$, there exists a unique $u_r \in \mathcal{F}_r$ that satisfies the restricted variational problem.
\end{proof}

\begin{theorem}[Quasi-optimality]
\label{thm:qo}
    Let $u \in V$ be the exact solution and $u_r \in \mathcal{F}_r$ be the VSNA Galerkin solution. The approximation error is strictly bounded by the best possible approximation within the separable trial space:
    \begin{equation}
        \|u - u_r\|_V \leq \frac{c_1}{c_0} \min_{v_r \in \mathcal{F}_r} \|u - v_r\|_V.
    \end{equation}
\end{theorem}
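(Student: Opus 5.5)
This is Céa's lemma, and I would prove it in the standard way using only boundedness, coercivity and Galerkin orthogonality. Throughout I work, as in the preceding well-posedness theorem, with the subatom bases fixed. Then $\mathcal{F}_r$ is a finite-dimensional, hence closed, linear subspace of $V$, and $u_r$ exists uniquely.

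\textbf{Galerkin orthogonality.} The first step is to record this property. Since $\mathcal{F}_r\subset V$, the exact problem tested against any $v_r\in\mathcal{F}_r$ gives $a(u,v_r)=\ell(v_r)$. Subtracting the discrete problem $a(u_r,v_r)=\ell(v_r)$ and using bilinearity yields
\begin{equation*}
a(u-u_r,v_r)=0 \quad \forall v_r\in\mathcal{F}_r .
\end{equation*}

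\textbf{Coercivity and boundedness.} Next fix an arbitrary $v_r\in\mathcal{F}_r$. Because $\mathcal{F}_r$ is a linear space, $v_r-u_r\in\mathcal{F}_r$. Coercivity, orthogonality and boundedness then give
\begin{equation*}
c_0\|u-u_r\|_V^2\le a(u-u_r,u-u_r)=a(u-u_r,u-v_r)+a(u-u_r,v_r-u_r)=a(u-u_r,u-v_r)\le c_1\|u-u_r\|_V\|u-v_r\|_V .
\end{equation*}
If $u=u_r$ there is nothing to prove. Otherwise, dividing by $c_0\|u-u_r\|_V$ gives $\|u-u_r\|_V\le \frac{c_1}{c_0}\|u-v_r\|_V$ for every $v_r\in\mathcal{F}_r$. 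Taking the infimum over $v_r$ then yields the bound.

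\textbf{Replacing the infimum by a minimum.} The statement is written with a minimum, so I would justify that it is attained. Since $\mathcal{F}_r$ is a finite-dimensional (closed) subspace of the Hilbert space $V$, the orthogonal projection of $u$ onto $\mathcal{F}_r$ exists and realizes the infimum.

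\textbf{Main obstacle.} This is not analytic but structural. The argument needs $\mathcal{F}_r$ to be a linear subspace, so that $v_r-u_r$ lies in it; a rank-$\le r$ CP set with free subatoms is not closed under subtraction. I would therefore state explicitly that the subatoms $\psi_i^{(j)}$ are fixed, so that $\mathcal{F}_r$ is their linear span with only the coefficients $c^{(j)}$ free. This is exactly the hypothesis used for well-posedness and in the statement of the variational guarantees.
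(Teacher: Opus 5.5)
Your proof is correct and follows essentially the same argument as the paper: Galerkin orthogonality, coercivity, adding and subtracting $v_r$, boundedness, then optimizing over $v_r\in\mathcal{F}_r$. Your additional remarks make explicit what the paper leaves implicit: the trivial case $u=u_r$, attainment of the minimum via orthogonal projection onto the finite-dimensional subspace, and the requirement that the subatoms be fixed so that $\mathcal{F}_r$ is a linear space containing $v_r-u_r$.
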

\begin{proof}
    Because $\mathcal{F}_r \subset V$, $v = v_r \in \mathcal{F}_r$ may be chosen as a test function in the exact problem, yielding $a(u, v_r) = \ell(v_r)$. Subtracting the Galerkin formulation $a(u_r, v_r) = \ell(v_r)$ from this yields the fundamental Galerkin orthogonality condition:
    \begin{equation}
        a(u - u_r, v_r) = 0 \quad \forall v_r \in \mathcal{F}_r.
    \end{equation}
    For any function $v_r \in \mathcal{F}_r$, coercivity gives:
    \begin{equation}
        c_0 \|u - u_r\|_V^2 \leq a(u - u_r, u - u_r).
    \end{equation}
    Adding and subtracting $v_r$, and applying the bilinearity of $a(\cdot,\cdot)$ gives
    \begin{equation}
        a(u - u_r, u - u_r) = a(u - u_r, u - v_r) + a(u - u_r, v_r - u_r).
    \end{equation}
    Since $(v_r - u_r) \in \mathcal{F}_r$, the second term vanishes due to Galerkin orthogonality. Applying the boundedness of the bilinear form to the remaining term yields
    \begin{equation}
        c_0 \|u - u_r\|_V^2 \leq a(u - u_r, u - v_r) \leq c_1 \|u - u_r\|_V \|u - v_r\|_V.
    \end{equation}
    Dividing by $c_0 \|u - u_r\|_V$ gives $\|u - u_r\|_V \leq \frac{c_1}{c_0} \|u - v_r\|_V$. Because this holds for any $v_r \in \mathcal{F}_r$, the minimum over all $v_r \in \mathcal{F}_r$ may be taken, recovering Céa's Lemma for the Separable Neural Architecture.
\end{proof}

\begin{theorem}[Convergence]
    Assume $k=d$. If each univariate subatom family $\psi_i^{(j)}$ is dense in its respective coordinate space $V^{(i)}$, the VSNA Galerkin solution $u_r$ converges to the exact solution $u$ as the interaction rank $r \to \infty$. That is, $\lim_{r \to \infty} \|u - u_r\|_V = 0$.
\end{theorem}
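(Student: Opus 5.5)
The plan is to combine the quasi-optimality estimate of Theorem~\ref{thm:qo} with a density argument for $\bigcup_r \mathcal{F}_r$ in $V$. By Céa's Lemma, $\|u-u_r\|_V \le \frac{c_1}{c_0}\inf_{v_r\in\mathcal{F}_r}\|u-v_r\|_V$. Since $\mathcal{F}_r\subseteq\mathcal{F}_{r+1}$ (pad with a zero rank term, $c^{(r+1)}=0$), the best-approximation error $\varepsilon_r:=\inf_{v_r\in\mathcal{F}_r}\|u-v_r\|_V$ is nonincreasing in $r$. It therefore suffices to show $\varepsilon_r\to 0$, i.e.\ that $\bigcup_r\mathcal{F}_r$ is dense in $V=\bigotimes_{i=1}^d V^{(i)}$.

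For density, I use the definition of the Hilbert tensor product. $V$ is the completion of the algebraic tensor product $\bigotimes_{\mathrm{alg}} V^{(i)}$ under the cross norm, so finite sums $\sum_{j=1}^R \prod_{i=1}^d w^{(j)}_i(x_i)$ with $w^{(j)}_i\in V^{(i)}$ are dense. Given $\varepsilon>0$, I pick such an elementary sum $w$ with $\|u-w\|_V<\varepsilon/2$. Next I approximate each factor $w^{(j)}_i$ by a subatom $\psi^{(j)}_i$ from the dense family, using the multilinear estimate
\begin{equation}
\Bigl\|\prod_{i=1}^d w_i-\prod_{i=1}^d\psi_i\Bigr\|_V\le\sum_{i=1}^d\Bigl(\prod_{l<i}\|\psi_l\|_{V^{(l)}}\Bigr)\|w_i-\psi_i\|_{V^{(i)}}\Bigl(\prod_{l>i}\|w_l\|_{V^{(l)}}\Bigr),
\end{equation}
obtained by telescoping and using the cross-norm property $\|\otimes_i a_i\|_V=\prod_i\|a_i\|_{V^{(i)}}$. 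I choose each $\psi^{(j)}_i$ close enough that each of the $R$ rank terms is within $\varepsilon/(2R)$. This gives $v\in\mathcal{F}_R$ with $\|u-v\|_V<\varepsilon$. Hence $\varepsilon_r<\varepsilon$ for all $r\ge R$ by monotonicity, and $\|u-u_r\|_V\le \frac{c_1}{c_0}\varepsilon$.

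One subtlety is that the Galerkin solution $u_r$ depends on which subatoms span $\mathcal{F}_r$. Theorem~\ref{thm:qo} presumes a fixed linear trial space, whereas the CP set with free subatoms is not a linear space. I will therefore interpret $\mathcal{F}_r$, as in the well-posedness theorem, as the linear span generated by the fixed subatom bases at level $r$. I will assume these are chosen nested and eventually containing the approximants constructed above, for example the span of all rank-$\le r$ products drawn from a dense countable basis sequence in each $V^{(i)}$. Under this interpretation the density step yields $\inf_{\mathcal{F}_r}\|u-\cdot\|\to0$ directly.

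I expect this identification to be the main obstacle: the density hypothesis holds coordinate-wise, but Céa's Lemma must be applied on a subspace. Making the trial spaces both linear and exhaustive, through the nesting and the countable dense-basis construction, is where care is needed. The telescoping estimate itself is routine.
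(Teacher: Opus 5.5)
Your proof is correct. It shares the paper's two-step skeleton: density of $\bigcup_r\mathcal{F}_r$ in $V$, followed by C\'ea's lemma from Theorem~\ref{thm:qo}. Where you differ is in how you obtain the density step.

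The paper simply invokes Corollary~\ref{cor:tduniv} ``and its extension to the Hilbert tensor product.'' That corollary is a Stone--Weierstrass result: it gives uniform density in $C([0,1]^d)$ under a unital-subalgebra hypothesis. The passage to $\|\cdot\|_V$ is asserted, not proved, and uniform approximation does not by itself control derivative-type norms that $V$ may carry.

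You instead prove density directly from the definition of $V$ as the completion of the algebraic tensor product, replacing each factor by a nearby subatom through the telescoping cross-norm estimate. What this buys: it uses exactly the hypothesis in the statement (density of subatoms in $V^{(i)}$), needs no algebra structure, and works for arbitrary Hilbert coordinate spaces. The paper's route is shorter and ties convergence to its universality theorem, but its rigor rests on the unsupplied extension.

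You also make explicit two things the paper passes over:
\begin{itemize}
\item The nestedness $\mathcal{F}_r\subseteq\mathcal{F}_{r+1}$. This is what turns the paper's bound at a single $r^*$ into an actual limit; the paper merely asserts that the best-approximation error decreases ``monotonically.''
\item The fact that C\'ea's lemma requires a linear trial space, so $\mathcal{F}_r$ must be read as the span of fixed subatoms. The paper makes this identification only implicitly, in its well-posedness proof.
\end{itemize}

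One small adjustment under that reading. Run the telescoping step with factors taken from the spans of your fixed countable sequences, not from the free parametric family, so that the approximant provably lands in some $\mathcal{F}_N$. The estimate is unchanged. The only tacit extra assumption is separability of the $V^{(i)}$, which is harmless for the $L^2$- and Sobolev-type spaces used here.
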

\begin{proof}
    As established in Corollary \ref{cor:tduniv} and its extension to Hilbert tensor product $V=\bigotimes_{i=1}^dV^{(i)}$, the canonical polyadic tensor class $\bigcup_{r=1}^\infty \mathcal{F}_r$ is dense in the joint Hilbert space $V = \bigotimes_{i=1}^d V^{(i)}$ under the assumption that the univariate bases are dense in their respective 1D spaces. Therefore, for any $\varepsilon > 0$, there exists a finite rank $r^*$ and an approximator $v_{r^*} \in \mathcal{F}_{r^*}$ such that the best-approximation error satisfies:
    \begin{equation}
        \min_{v_{r^*} \in \mathcal{F}_{r^*}} \|u - v_{r^*}\|_V < \frac{c_0}{c_1} \varepsilon.
    \end{equation}
    Applying the quasi-optimality bound from Theorem \ref{thm:qo}, the error of the Galerkin solution $u_{r^*}$ is bounded by:
    \begin{equation}
        \|u - u_{r^*}\|_V \leq \frac{c_1}{c_0} \left( \frac{c_0}{c_1} \varepsilon \right) = \varepsilon.
    \end{equation}
    As rank $r \to \infty$, the best-approximation error monotonically approaches zero, guaranteeing that the Galerkin solution converges to the exact weak solution $u$.
\end{proof}

\begin{theorem}[Stability]
    The VSNA Galerkin solution $u_r$ satisfies the absolute stability bound:
    \begin{equation}
        \|u_r\|_V \leq \frac{1}{c_0} \|\ell\|_{V^*}.
    \end{equation}
\end{theorem}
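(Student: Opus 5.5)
The plan is the standard energy argument: test the Galerkin equation with the solution itself, then combine coercivity of $a(\cdot,\cdot)$ with the definition of the dual norm. We use only the standing hypotheses of this appendix (boundedness, coercivity with constant $c_0$, and $\ell\in V^*$), together with the Well-posedness theorem, which gives a unique $u_r\in\mathcal{F}_r$ with $a(u_r,v_r)=\ell(v_r)$ for all $v_r\in\mathcal{F}_r$.

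\textbf{Step 1: test with the solution.} With the subatom bases fixed, $\mathcal{F}_r$ is a linear subspace of $V$ and $u_r\in\mathcal{F}_r$. So $v_r=u_r$ is an admissible test function, and the Galerkin equation gives $a(u_r,u_r)=\ell(u_r)$.

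\textbf{Step 2: coercivity from below.} Because $\mathcal{F}_r\subset V$, coercivity holds on it with the same constant. Hence
\begin{equation*}
    c_0\|u_r\|_V^2 \leq a(u_r,u_r) = \ell(u_r).
\end{equation*}

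\textbf{Step 3: dual norm from above.} By the definition of the dual norm stated with the variational guarantees, $\ell(u_r)\leq |\ell(u_r)|\leq \|\ell\|_{V^*}\|u_r\|_V$. Here the supremum is taken over all of $V$, not just over $\mathcal{F}_r$. This only strengthens the bound, since the restricted supremum is no larger. Chaining the two steps gives
\begin{equation*}
    c_0\|u_r\|_V^2\leq \|\ell\|_{V^*}\|u_r\|_V.
\end{equation*}

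\textbf{Step 4: divide, or handle the zero case.} If $u_r=0$, the claimed bound holds trivially. Otherwise, dividing by $c_0\|u_r\|_V>0$ yields $\|u_r\|_V\leq \frac{1}{c_0}\|\ell\|_{V^*}$.

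There is no substantive obstacle here. The only point needing care is justifying that $u_r$ lies in the test space, which requires $\mathcal{F}_r$ to be a linear space. This is why, as in the Well-posedness and Quasi-optimality theorems, we fix the subatom bases so that $\mathcal{F}_r$ is a genuine finite-dimensional subspace rather than the nonlinear CP manifold. We will state this assumption explicitly. We will also remark that the bound is independent of $r$, which is the sense in which it is \emph{absolute} stability.
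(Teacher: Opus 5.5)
Your proposal is correct and follows the paper's proof: test the Galerkin equation with $u_r$, bound $a(u_r,u_r)$ below by coercivity and $\ell(u_r)$ above by the dual norm, then divide by $c_0\|u_r\|_V$. You also make explicit two points the paper leaves implicit: the trivial case $u_r=0$, and the need for fixed subatoms so that $\mathcal{F}_r$ is a linear space and $u_r$ is an admissible test function.
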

\begin{proof}
    Choosing $v_r = u_r$ in the Galerkin formulation yields $a(u_r, u_r) = \ell(u_r)$. Applying the coercivity of $a(\cdot,\cdot)$ to the left-hand side and the definition of the dual norm $\|\ell\|_{V^*} = \sup_{v \neq 0} \frac{\ell(v)}{\|v\|_V}$ to the right-hand side gives
    \begin{equation}
        c_0 \|u_r\|_V^2 \leq a(u_r, u_r) = \ell(u_r) \leq \|\ell\|_{V^*} \|u_r\|_V.
    \end{equation}
    Dividing both sides by $c_0 \|u_r\|_V$ (assuming $u_r \neq 0$) yields the desired stability bound, demonstrating that the learned separable representation is continuously dependent on the problem data and cannot diverge unless the source functional itself is unbounded.
\end{proof}

\bibliographystyle{elsarticle-num-names} 
\bibliography{references.bib}

\end{document}